\documentclass[journal]{IEEEtran}

\usepackage{cite}
\usepackage{amsmath}
\usepackage{amsthm}
\usepackage[T1]{fontenc}
\usepackage{newtxtext,newtxmath}
\usepackage{bm}
\newtheorem{theorem}{Theorem}

\newtheorem{proposition}{Proposition}
\newtheorem{corollary}{Corollary}
\theoremstyle{remark}\newtheorem{remark}{Remark}
\theoremstyle{definition}\newtheorem{assumption}{Assumption}
\DeclareMathOperator*{\argmin}{arg\,min}
\DeclareMathOperator*{\argmax}{arg\,max}
\usepackage{tikz}
\usetikzlibrary{arrows.meta,positioning,fit,calc,patterns,backgrounds}
\usepackage{graphicx}
\usepackage{algorithm}
\usepackage{algpseudocode}
\usepackage{booktabs}
\usepackage{pdfpages}
\usepackage{multirow}
\usepackage{threeparttable}
\usepackage{xcolor}
\usepackage{url}

\usepackage{balance}
\newcommand{\figframe}[1]{{\setlength{\fboxsep}{2.5pt}%
  \setlength{\fboxrule}{0.4pt}\fcolorbox{black!22}{white}{%
  \parbox{\dimexpr\columnwidth-2\fboxsep-2\fboxrule\relax}{\centering #1}}}}
\newcommand{\figframewide}[1]{{\setlength{\fboxsep}{2.5pt}%
  \setlength{\fboxrule}{0.4pt}\fcolorbox{black!22}{white}{%
  \parbox{\dimexpr\textwidth-2\fboxsep-2\fboxrule\relax}{\centering #1}}}}
\makeatletter
\@ifpackageloaded{lineno}{%
\renewcommand{\@BAlancecol}{\if@twocolumn
  \setbox0=\vbox{\unvbox\@outputbox} \@tempdima=\ht0
  \advance\@tempdima by \topskip \advance\@tempdima
     by -\baselineskip \divide\@tempdima by 2
     \splittopskip=\topskip
  {\vbadness=\@M \loop \global\setbox3=\copy0
   \global\setbox1=\vsplit3 to \@tempdima
   \ifdim\ht3>\@tempdima \global\advance\@tempdima by 1pt \repeat}
   \setbox\@leftcolumn=\vbox to \@tempdima{%
     \protected@write\@auxout{}{\string\def\string\@LN@column{1}}%
     \unvbox1\vfil}
   \setbox\@outputbox=\vbox to \@tempdima
     {\protected@write\@auxout{}{\string\def\string\@LN@column{2}}%
      \dimen2=\dp3\unvbox3\kern-\dimen2
      \vfil}
  \fi}}{}
\makeatother

\newif\ifblind
\blindfalse

\newcommand{\Ninst}{100}  %
\newcommand{\Nseeds}{3}  %

\newcommand{\LkKsCombos}{24}

\newcommand{\TightN}{267}

\newcommand{\TightMeanPct}{99.7}

\newcommand{\TightWorstPct}{97.1}

\newcommand{\UtilGridLow}{8}  %
\newcommand{\UtilGridHigh}{82}  %
\newcommand{\UtilWinMin}{66}  %
\newcommand{\UtilLoseMax}{56}  %
\newcommand{\UtlVOneTSix}{74.3\%}  %
\newcommand{\UtlVTwoTSix}{53.1\%}  %

\newcommand{\UtilFiftyTSix}{76}  %
\newcommand{\UtilFiftyTTen}{80}  %
\newcommand{\UtilEighty}{74}  %
\newcommand{\ScPdrFiftyTSix}{1113.9$\pm$19.3}  %
\newcommand{\ScCpFiftyTSix}{1029.8$\pm$24.4}  %
\newcommand{\ScMixFiftyTSix}{1061.1$\pm$18.2}  %
\newcommand{\ScPdrFiftyTTen}{1789.9$\pm$39.5}  %
\newcommand{\ScCpFiftyTTen}{1704.1$\pm$45.6}  %
\newcommand{\ScMixFiftyTTen}{1668.0$\pm$31.9}  %
\newcommand{\ScPdrEighty}{2173.7$\pm$36.7}  %
\newcommand{\ScMixEighty}{1890.8$\pm$23.4}  %
\newcommand{\ScMixDPdrFiftyTSix}{4.7}  %
\newcommand{\ScMixDPdrFiftyTTen}{6.8}  %
\newcommand{\ScMixDPdrEighty}{13.0}  %
\newcommand{\ScMixWtlFiftyTSix}{20/0/0}  %
\newcommand{\ScMixWtlFiftyTTen}{20/0/0}  %
\newcommand{\ScMixCpPval}{5.6\times10^{-3}}  %
\newcommand{\ScMixPdrPvalMax}{1.9\times10^{-6}}  %

\newcommand{\SampDGaLow}{1.7}
\newcommand{\SampDGaHigh}{4.7}

\newcommand{\SampWallFifty}{180}  %
\newcommand{\SampN}{64}
\newcommand{\SampMixDCpEighty}{8.6}  %
\newcommand{\SampWallEighty}{440}  %
\newcommand{\ScCpEighty}{1974.2$\pm$83.2}  %
\newcommand{\ScMixDCpEighty}{4.2}  %
\newcommand{\ScMixWtlCpEighty}{16/0/4}  %
\newcommand{\ScMixPvalCpEighty}{2.0\times10^{-3}}  %
\newcommand{\ScCpOwnGapEighty}{34}  %
\newcommand{\ScCpHalfFiftyTSix}{1034.3$\pm$23.5}  %
\newcommand{\ScCpHalfFiftyTTen}{1709.4$\pm$46.7}  %
\newcommand{\ScCpHalfEighty}{2002.0$\pm$83.7}  %

\newcommand{\UtilLinkVThree}{66}  %
\newcommand{\UtilLinkVSix}{62}  %

\newcommand{\ProcHr}{4.4}
\newcommand{\MoveMinTOne}{30}
\newcommand{\MoveHrTSix}{2.8}
\newcommand{\MoveHrTTen}{4.7}

\newcommand{\ParamsDeploy}{34{,}892}  %
\newcommand{\TrainGpuHours}{3.4}      %
\newcommand{\TrainUpdates}{2{,}000}

\newcommand{\LkWedgeAbsVNine}{15.3}   %

\newcommand{\qGaSpreadLow}{4}     %
\newcommand{\qGaSpreadHigh}{14}   %
\newcommand{\MyoSigD}{0.86}

\newcommand{\GuidePriceZeroPct}{95}   %

\newcommand{\ScMixWtlEighty}{20/0/0}  %
\newcommand{\ScCpNoMatchDef}{2.2}  %
\newcommand{\ScMixWtlCpFifty}{14/0/6}  %

\newcommand{\GtwoTostPvalMax}{7.5\times10^{-14}}

\renewcommand{\Ninst}{100}
\renewcommand{\Nseeds}{3}  %
\newcommand{\seedset}{$\{301,302,303\}$}
\newcommand{\gridfleetset}{\{1,2,3\}}

\newcommand{\primarysize}{$10\times25$}
\newcommand{\siglevel}{0.05}

\newcommand{\cpbudget}{300}
\newcommand{\gawall}{60}
\newcommand{\gapopsize}{100}
\newcommand{\qPdrVOneTThree}{288.3$\pm$10.2}  %
\newcommand{\qCpVOneTThree}{257.4$\pm$8.1}  %
\newcommand{\qOursVOneTThree}{284.6$\pm$7.6}  %
\newcommand{\qGapVOneTThree}{+10.5\%}  %
\newcommand{\qWtlVOneTThree}{62/0/38}  %
\newcommand{\qPdrVOneTSix}{449.7$\pm$23.5}  %
\newcommand{\qCpVOneTSix}{394.2$\pm$9.6}  %
\newcommand{\qOursVOneTSix}{422.1$\pm$9.4}  %
\newcommand{\qGapVOneTSix}{+7.1\%}  %
\newcommand{\qWtlVOneTSix}{93/0/7}  %
\newcommand{\qWtlGaVOneTSix}{0/0/100}  %
\newcommand{\qPdrVOneTTen}{726.1$\pm$40.1}  %
\newcommand{\qCpVOneTTen}{639.7$\pm$16.9}  %
\newcommand{\qOursVOneTTen}{672.1$\pm$17.7}  %
\newcommand{\qGapVOneTTen}{+5.1\%}  %
\newcommand{\qWtlVOneTTen}{90/0/10}  %
\newcommand{\qPdrVTwoTThree}{252.8$\pm$8.7}  %
\newcommand{\qCpVTwoTThree}{225.5$\pm$8.7}  %
\newcommand{\qOursVTwoTThree}{248.1$\pm$8.5}  %
\newcommand{\qGapVTwoTThree}{+10.0\%}  %
\newcommand{\qWtlVTwoTThree}{83/0/17}  %
\newcommand{\qPdrVTwoTSix}{309.2$\pm$10.6}  %
\newcommand{\qCpVTwoTSix}{274.1$\pm$9.6}  %
\newcommand{\qOursVTwoTSix}{304.4$\pm$7.9}  %
\newcommand{\qGapVTwoTSix}{+11.1\%}  %
\newcommand{\qWtlVTwoTSix}{69/0/31}  %
\newcommand{\qPdrVTwoTTen}{409.3$\pm$14.8}  %
\newcommand{\qCpVTwoTTen}{369.3$\pm$12.0}  %
\newcommand{\qOursVTwoTTen}{401.4$\pm$8.5}  %
\newcommand{\qGapVTwoTTen}{+8.7\%}  %
\newcommand{\qWtlVTwoTTen}{75/0/25}  %
\newcommand{\qPdrVThreeTThree}{247.7$\pm$8.9}  %
\newcommand{\qCpVThreeTThree}{221.5$\pm$9.1}  %
\newcommand{\qOursVThreeTThree}{243.0$\pm$8.1}  %
\newcommand{\qGapVThreeTThree}{+9.7\%}  %
\newcommand{\qWtlVThreeTThree}{81/0/19}  %
\newcommand{\qPdrVThreeTSix}{286.8$\pm$8.9}  %
\newcommand{\qCpVThreeTSix}{245.8$\pm$8.9}  %
\newcommand{\qOursVThreeTSix}{282.7$\pm$8.3}  %
\newcommand{\qGapVThreeTSix}{+15.0\%}  %
\newcommand{\qWtlVThreeTSix}{69/0/31}  %
\newcommand{\qPdrVThreeTTen}{350.7$\pm$11.7}  %
\newcommand{\qCpVThreeTTen}{302.5$\pm$9.6}  %
\newcommand{\qOursVThreeTTen}{349.2$\pm$9.9}  %
\newcommand{\qGapVThreeTTen}{+15.4\%}  %
\newcommand{\qWtlVThreeTTen}{59/0/41}  %
\newcommand{\GaOneImpVOne}{+1.32}  %
\newcommand{\GbImpVOne}{+14.50}  %
\newcommand{\GbWtlVOne}{100/0/0}  %
\newcommand{\GbImpVTwo}{+1.96}  %
\newcommand{\GbWtlVTwo}{82/0/18}  %
\newcommand{\GuImpVOneTThree}{+1.63}  %
\newcommand{\GuImpVTwoTThree}{+0.62}  %
\newcommand{\GuImpVOneTSix}{+2.11}  %
\newcommand{\GuImpVTwoTSix}{+0.84}  %
\newcommand{\GuPvalMax}{1.6\times10^{-5}}  %
\newcommand{\TostCells}{16}
\newcommand{\TostEq}{15}

\newcommand{\TostBetterDiff}{-2.84\%}  %
\newcommand{\TostEqMaxAbs}{0.37\%}  %
\newcommand{\LkOursVThree}{4239$\pm$284}  %
\newcommand{\LkJointDVThree}{-14.7\%}  %
\newcommand{\LkModDVThree}{-16.3\%}  %
\newcommand{\LkHeurDVThree}{-8.4\%}  %
\newcommand{\LkOursVSix}{2949$\pm$166}  %
\newcommand{\LkJointDVSix}{+27.8\%}  %
\newcommand{\LkModDVSix}{+23.0\%}  %
\newcommand{\LkHeurDVSix}{+30.7\%}  %
\newcommand{\LkOursVNine}{2725$\pm$172}  %
\newcommand{\LkJointDVNine}{+49.2\%}  %
\newcommand{\LkModDVNine}{+44.1\%}  %
\newcommand{\LkHeurDVNine}{+45.5\%}  %
\newcommand{\LkOursVTwelve}{2725$\pm$173}  %
\newcommand{\LkJointDVTwelve}{+57.2\%}  %
\newcommand{\LkModDVTwelve}{+54.4\%}  %
\newcommand{\LkHeurDVTwelve}{+49.8\%}  %
\newcommand{\LkOursVFifteen}{2711$\pm$158}  %
\newcommand{\LkJointDVFifteen}{+56.8\%}  %
\newcommand{\LkModDVFifteen}{+56.8\%}  %
\newcommand{\LkHeurDVFifteen}{+49.5\%}  %
\newcommand{\LkOursVEighteen}{2720$\pm$162}  %
\newcommand{\LkJointDVEighteen}{+57.8\%}  %
\newcommand{\LkModDVEighteen}{+58.4\%}  %
\newcommand{\LkHeurDVEighteen}{+50.1\%}  %
\newcommand{\LkVThreePvalMax}{9.2\times10^{-14}}  %
\newcommand{\LkWedgeVSix}{-11.0\%}  %
\newcommand{\CertFiftyLow}{38}  %
\newcommand{\CertFiftyHigh}{47}  %
\newcommand{\CertNadm}{1{,}600}  %

\newcommand{\MixSaRelFiftyTSix}{+1.89}

\newcommand{\MixSaRelFiftyTTen}{+2.54}

\newcommand{\DisWallFifty}{7}
\newcommand{\DisWallEighty}{14}

\newcommand{\HetEnvMax}{100}            %
\newcommand{\HetRealizedHundredMean}{7.6}   %
\newcommand{\HetNTransformed}{1{,}500}  %
\newcommand{\HetAdvBaseVOne}{1.96}      %
\newcommand{\HetAdvPeakVOne}{3.54}      %
\newcommand{\HetAdvHundredVOne}{2.70}   %
\newcommand{\HetAdvBaseVTwo}{0.55}      %
\newcommand{\HetAdvHundredVTwo}{2.40}   %
\newcommand{\HetTrainPvalMax}{2.3\times10^{-3}} %

\newcommand{\AnchorModules}{3{,}034}   %
\newcommand{\AnchorTowers}{two}        %
\newcommand{\AnchorFloors}{56}         %
\newcommand{\ModulesPerFloor}{27}      %

\newcommand{\qPdrHolmMax}{0.025}

\newcommand{\AblVehDVOneTSix}{4.8}  %
\newcommand{\AblVehDVOneTTen}{6.2}  %
\newcommand{\AblVehWtlVOneTSix}{99/0/1}  %
\newcommand{\AblVehWtlVOneTTen}{97/0/3}  %
\newcommand{\AblVehDVOneTThree}{3.1}  %
\newcommand{\AblVehRestMax}{0.9}  %
\newcommand{\AblVehPHolmMax}{10^{-17}}  %

\newcommand{\DisrCpVsPolMin}{11.0}  %
\newcommand{\DisrCpVsPolMax}{19.0}  %
\newcommand{\DisrCpImpMax}{0.8}  %
\newcommand{\DisrCpWallShortMin}{9}  %
\newcommand{\DisrCpWallShortMax}{41}  %
\newcommand{\DisrGaVsPolMin}{4.8}  %
\newcommand{\DisrGaVsPolMax}{9.5}  %
\newcommand{\DisrGaWallMin}{33}  %
\newcommand{\DisrGaWallMax}{120}  %
\newcommand{\ColdFiftyNone}{3 of 3}  %

\newcommand{\LatEvtCpuTen}{2.3}  %
\newcommand{\LatEvtCpuEighty}{5.9}  %
\newcommand{\LatEvtGpuEighty}{4.3}  %
\newcommand{\LatSchedCpuTen}{0.8}  %
\newcommand{\LatSchedCpuEighty}{15.9}  %
\newcommand{\PeakRssCpuMax}{519}  %
\newcommand{\StMixDPdrFiftyLow}{3.6}  %
\newcommand{\MixSaRelFiftyLow}{+4.33}  %
\newcommand{\StMixWtlEightyLow}{5/0/15}  %
\newcommand{\StMixLongPdrEightyLow}{1.5}  %
\newcommand{\MixSaRelEightyLow}{+4.44}  %
\newcommand{\StSparseEightyLow}{4}  %
\newcommand{\MixSaLeadMax}{4.4}  %

\newcommand{\NoisePolReplanVsRsMax}{0.7}  %
\newcommand{\NoisePolVsPdrMin}{4.1}  %
\newcommand{\NoisePolVsPdrMax}{6.4}  %
\newcommand{\AblVehMixDFiftyTTen}{12.9}  %
\newcommand{\AblVehMixDEighty}{23.0}  %
\newcommand{\AblVehMixWtl}{20/0/0}  %
\newcommand{\AblVehMixPHolm}{2.9\times10^{-6}}  %
\newcommand{\AblVehMixSmallMax}{5.0}  %

\newcommand{\SeedRowsN}{23}  %
\newcommand{\BootExclZeroRows}{18}  %

\newcommand{\CpAddRootTSixMin}{3.6}  %
\newcommand{\CpAddRootTSixMax}{4.7}  %
\newcommand{\CpAddRootTTenMin}{2.2}  %
\newcommand{\CpAddRootTTenMax}{2.8}  %
\newcommand{\CpIncImpTSix}{4.8}  %
\newcommand{\CpIncWtlTSix}{12/0/0}  %
\newcommand{\CpIncWtlTTen}{6/0/6}  %
\newcommand{\MyoPOneSided}{7.5\times10^{-5}}  %
\newcommand{\ComaCells}{14}  %
\newcommand{\ComaDMin}{2.5}  %
\newcommand{\ComaDMax}{9.8}  %
\newcommand{\TightAllMean}{87.6}  %
\newcommand{\TightFleetMean}{76.0}  %
\newcommand{\TightFleetN}{304}  %
\newcommand{\RootNoVehCovMin}{18}  %
\newcommand{\RootNoVehCovMax}{26}  %
\newcommand{\RootVehCovMin}{74}  %
\newcommand{\RootVehCovMax}{80}  %
\newcommand{\RtImpMean}{3.2}  %
\newcommand{\RtGapShare}{33}  %
\newcommand{\RtResShare}{0.85}  %
\newcommand{\RtEmptyShare}{2.4}  %
\newcommand{\AnProdDMin}{8.0}  %
\newcommand{\AnProdDMax}{16.8}  %
\newcommand{\AnProdWtl}{20/0/0}  %
\newcommand{\AnProdP}{1.9\times10^{-6}}  %
\newcommand{\FbVsPdrMin}{3.8}  %
\newcommand{\FbVsPdrMax}{13.1}  %
\newcommand{\FbSttMin}{11}  %
\newcommand{\FbSttMax}{20}  %

\providecommand{\Cmax}{C_{\max}}
\providecommand{\Ccls}{\mathcal{C}}
\providecommand{\Fcls}{\mathcal{F}}
\providecommand{\elig}{\mathrm{elig}}
\providecommand{\cell}{\mathrm{cell}}
\providecommand{\Bchain}{B_{\mathrm{chain}}}
\providecommand{\Bmch}{B_{\mathrm{mch}}}
\providecommand{\Bveh}{B_{\mathrm{veh}}}

\providecommand{\jr}{\mathrm{jr}}

\begin{document}
\bstctlcite{BSTcontrol}

\title{BOLT: Multi-Bottleneck Bound-Guided Learning for Flexible
Job-Shop Scheduling with Time Lags and Resource-Coupled Transport}

\author{Ziheng~Zhang, Wei~Zhang, Yunyi~Zhao, Zuming~Liu, Wei~Zhou, and
Boon~Ping~Gan%
\thanks{This work has been submitted to the IEEE for possible publication. Copyright may be transferred without notice, after which this version may no longer be accessible.}%
\thanks{Z. Zhang, W. Zhang, and Y. Zhao are with the Infocomm Technology
Cluster, Singapore Institute of Technology, Singapore 828608 (e-mail:
ziheng.zhang@singaporetech.edu.sg; wei.zhang@singaporetech.edu.sg;
yunyi.zhao@singaporetech.edu.sg).}%
\thanks{Z. Liu is with the College of Smart Energy,
Shanghai Jiao Tong University, Shanghai, China (e-mail: zmliu@sjtu.edu.cn). W. Zhou is with the School of Computer Science and Informatics,
Cardiff University, Cardiff CF24 4AG, U.K. (e-mail: ZhouW26@cardiff.ac.uk). B. P. Gan is with D-SIMLAB Technologies, Singapore (e-mail:
boonping@d-simlab.com). Corresponding author: Wei Zhang.}}
\markboth{Preprint}%
{BOLT: Multi-Bottleneck Bound-Guided Learning for FJSP with Time Lags and
Resource-Coupled Transport}

\maketitle

\begin{abstract}
Flexible job-shop scheduling becomes resource-coupled when a scarce
vehicle fleet moves jobs between stations, because the schedule
sets the transport waits. BOLT (bound-oriented learning for
transport-coupled scheduling) builds a lower bound from chain,
machine-capacity, and fleet-capacity relaxations. The bound equals the
makespan at termination, so its per-step changes form a reward that
sums to a constant minus the makespan. Because it never exceeds the
attainable makespan, each candidate's bound increase is a
proven rise of the makespan floor, which the policy reads as a price.
The root bound certifies a completed schedule's suboptimality gap
without a solver. On industry-motivated 10-module
prefabricated-construction instances, explicit transport coupling
outperforms a transport-penalty policy in every tested cell where loaded
moves occupy at least $\UtilWinMin\%$ of fleet time and in none at or
below $\UtilLoseMax\%$. Across five zero-shot 50- and 80-module production
batches, BOLT outperforms the best dispatching-rule pair on four and
the penalty policy on the three with long moves. In the three-vehicle 80-module batch,
its mean makespan is $\ScMixDPdrEighty\%$ below the best pair and
$\ScMixDCpEighty\%$ below an hour-budget constraint-programming solver
whose proven gap averages $\ScCpOwnGapEighty\%$. BOLT decides in
\LatEvtCpuTen--\LatEvtCpuEighty~ms per event on one processor core and
re-plans a disruption in seconds.
\end{abstract}

{\renewcommand{\abstractname}{Note to Practitioners}%
\begin{abstract}
Many plants move heavy modules between stations on a few transporters
or automated guided vehicles, and the wait for a vehicle depends
on the schedule. This paper gives such a plant a dispatcher guided
by a guaranteed lower bound on completion time. The dispatcher
decides each machine and vehicle assignment within milliseconds on one
processor core and re-plans within seconds after a breakdown. Its station
and vehicle decisions have separate output layers, so a plant can retune
or roll back one class while the other stays fixed. Without a solver, the
dispatcher reports a guaranteed bound on each schedule's distance
from optimal, which is loose on large batches
(\CertFiftyLow--\CertFiftyHigh\% on average).

The cases studied use published process data for
prefabricated prefinished volumetric construction. Only tested cases with loaded moves of \MoveHrTSix{} to
\MoveHrTTen~h, as between separate production areas or with a slow fleet,
showed gains over a fixed transport allowance. In the
10-module cases, explicit transport modeling reduced makespan in every
tested case where loaded trips occupied at least $\UtilWinMin\%$ of
fleet time, and a fixed transport allowance sufficed at or below
$\UtilLoseMax\%$. A plant estimates this share from its expected
loaded travel, fleet size, and batch duration, and trains the policy
once on its instance mix. At high load, the dispatcher lost
to the best dispatching rules in one tested case, where one vehicle
served 80 modules.

The model assumes known processing, waiting, and travel times and does
not model stochastic arrivals, vehicle routing and congestion, or mixed
fleets. A schedule executed under duration
noise of up to $\pm30\%$ stayed within $\NoisePolReplanVsRsMax\%$ of
re-planning. When a plant can wait minutes for a plan, per-instance
search finds shorter schedules. Extending the method to stochastic
arrivals and online dispatch without a reserved destination machine is
the next step.

\end{abstract}}

\begin{IEEEkeywords}
Automated guided vehicles, bound-guided reinforcement learning,
flexible job-shop scheduling, prefabricated construction,
resource-coupled transport.
\end{IEEEkeywords}

\providecommand{\resultfig}[2]{%
  \IfFileExists{#1}%
    {\includegraphics[width=\linewidth]{#1}}%
    {\fbox{\parbox[c][3.4cm][c]{0.92\linewidth}{\centering\footnotesize
       Figure placeholder\\[2pt]#2\\[2pt](rendered when results land)}}}}

\section{Introduction}\label{sec:intro}

Automated production systems often move partly finished products between stations on a shared transporter or automated guided vehicle fleet. Their machine and vehicle schedules jointly set throughput. Prefabricated prefinished volumetric construction (PPVC) is one heavy-load case, in which factories build whole rooms before shipment to site. Singapore mandates PPVC on selected government land-sale sites, and one documented project moved 1{,}866 modules that each weighed 26 to 31 tons~\cite{hwang2018ppvc,bca2024ppvc}. Another documented development erected \AnchorTowers{} \AnchorFloors-story towers from \AnchorModules{} such modules, about \ModulesPerFloor{} modules per floor~\cite{unitedtec2022avenuesouth}. Each module rides a heavy transporter between stations, and every
move occupies one vehicle for its full duration, so the plant must
assign stations and vehicles together whenever a module becomes ready.

This joint decision is difficult because the schedule determines each transport wait. A job moves through a sequence of stations, with a vehicle carrying it between different locations. Some waits have durations that the instance fixes. A cast module must cure, which means harden, for a known time before
the next operation can begin, and a machine setup takes a known time
once the job order is chosen. Transport waits differ, because a finished module remains at its
station until a vehicle becomes free, and how long that takes depends
on which other jobs are using the fleet at that moment. Machine decisions set when transport requests arise, while fleet availability sets when the next operation can start. A fixed travel allowance therefore misjudges the delay when vehicles
are scarce, and the scheduler must represent the load that each
decision places on the fleet.

Existing scheduling methods do not provide that representation as a certified learning signal. Prefabrication scheduling studies omit transport or treat it as a
fixed time surcharge~\cite{wang2018offsite}, and learning studies of
dynamic flexible job shops model it as a fixed transfer
delay~\cite{zhang2024dynamic}. Exact and metaheuristic models schedule a finite fleet~\cite{bilge1995time,berterottiere2024flexible,yao2025fjspla}. Because they search each instance separately, they do not amortize
scheduling decisions into a learned millisecond-scale dispatcher. Learning-to-dispatch policies make online decisions in milliseconds~\cite{luo2022hmappo}. Several such methods derive each reward from changes in a lower bound on the makespan attainable from the partial schedule~\cite{zhang2020l2d,song2023fjspdrl,wang2024daniel,paper1anon}. If the bound equals the makespan at termination, these rewards sum to a constant minus the makespan, so maximizing their return minimizes makespan exactly. These bound-guided
methods~\cite{zhang2020l2d,song2023fjspdrl,wang2024daniel,paper1anon}
do not model an explicit vehicle resource. Most vehicle-aware
learners~\cite{li2025realtime,wang2025hierarchical,yang2026hgamppo}
use designed rewards or learned estimates. Moon et
al.~\cite{moon2024learning} combine vehicle decisions
with a chain-bound reward, but their bound omits aggregate fleet
capacity. A chain bound follows each job's own remaining operations with one
fixed travel time per move, so it treats every job as if a vehicle
were always free. It stays flat as the makespan grows with the number of jobs waiting for one vehicle. What remains missing is an admissible bound, tight at termination, that counts aggregate fleet workload, so that an objective-faithful reward also responds to fleet contention.

The principle adopted here is to represent every shared bottleneck in one admissible bound. For each partial schedule, chain, machine-capacity, and fleet-capacity relaxations bound job progression, machine workload, and mandatory loaded moves. Their maximum remains admissible because none can exceed the makespan
attainable from that state, and it selects the active bottleneck with
no tuned weight. The fleet-capacity term aggregates transport work across jobs, so it rises under contention where a chain bound stays flat. The same quantity can then connect the scheduling objective to learning decisions, certified gap bounds, and solver search.

BOLT (bound-oriented learning for transport-coupled scheduling) implements this principle with a certified multi-bottleneck lower bound. The bound is admissible at every reachable partial schedule (Section~\ref{sec:theory}). It equals the makespan at termination, so its one-step change supplies an objective-faithful reward. For each candidate, the successor bound gives a certified action price, which is a proven minimum increase in the makespan lower bound (Section~\ref{sec:method}). At the root, the same quantity gives a solver-free certified upper bound on the relative suboptimality gap of any completed schedule. As a redundant constraint, its fleet-capacity term also raises the lower bound that warm-started CP-SAT proves in the tested one-vehicle cells (Section~\ref{subsec:cert}). The merged-head and factorized realizations read the same bound signals, so controller ownership does not change the learning objective. Whether that objective's fleet term changes the schedules a plant runs depends on
how heavily the fleet is loaded.

On the prefabricated-construction instances tested, the benefit of
explicit coupling depends on fleet load. At the 10-module training size, explicit coupling shortens makespan in every cell at or above $\UtilWinMin\%$ loaded utilization, and a transport-penalty policy is adequate at or below $\UtilLoseMax\%$. Production batches of 50 and 80 modules match the output of two to three tower floors and lie outside the trained sizes. On the three batches with long moves, BOLT outperforms this penalty policy on every instance. In the 80-module batch with three vehicles, its mean makespan is $\ScMixDPdrEighty\%$ below the best
dispatching-rule pair and $\ScMixDCpEighty\%$ below warm-started CP-SAT
given an hour, whose proven gap averages $\ScCpOwnGapEighty\%$. Ablations on four pre-specified cells attribute the training
improvement to the action-price feature. Removing the fleet-capacity term lengthens the schedules by $\AblVehDVOneTSix\%$ and $\AblVehDVOneTTen\%$ in the two 10-module cells where that term sets the bound, and by $\AblVehMixDFiftyTTen$--$\AblVehMixDEighty\%$ on three production batches. BOLT takes $\LatEvtCpuTen$--$\LatEvtCpuEighty$~ms per event on one processor core, or \LatSchedCpuTen--\LatSchedCpuEighty~s for a whole schedule, from 10 to 80 modules. After a machine breakdown, it re-plans in \DisWallFifty{}--\DisWallEighty~s, while CP-SAT at budgets of 1 and 10~s returns the right-shifted plan unchanged. A genetic algorithm needs \DisrGaWallMin--\DisrGaWallMax~s to find a schedule shorter than BOLT's. Plants can use BOLT to learn and deploy resource-coupled transport scheduling, bound a completed schedule's relative suboptimality gap without a solver, and decide when explicit coupling is useful.

The scheduling model is deterministic, with processing, waiting, and travel
times known before scheduling begins, and machine choice lies within a
station cell, as in the PPVC layouts studied. Similar conditions arise when heavy partly finished products move on a small automated guided vehicle fleet~\cite{li2025realtime,wang2025hierarchical,berterottiere2024flexible}. The studied PPVC instances come from an industry-motivated generator whose routing classes and operation structure follow published PPVC process documentation~\cite{bca2024ppvc}. They contain no plant's production records, and the study reports no plant deployment. Within this scope, to our knowledge, no learned flexible job-shop scheduler combines this bound with certified action pricing, a solver-free certified gap bound, and solver strengthening. Splitting control between machine and vehicle controllers is not part of this claim, because it is established practice~\cite{li2025realtime,yang2026hgamppo,wang2025hierarchical,zhang2024dynamic}. None of these systems trains on a proven bound (Section~\ref{sec:related}). BOLT makes online dispatch decisions, while exact and metaheuristic methods with per-instance budgets provide solution-quality references. The paper makes four contributions.

\begin{itemize}
\item \textbf{Resource-coupled scheduling formulation.} We formalize flexible job-shop scheduling with machine-free time lags and a finite fleet whose vehicles are seized for each move and whose destination machines are reserved in transit. A delay-semantics taxonomy distinguishes resource-coupled transport from machine-free and machine-side delays (Section~\ref{sec:problem}).
\item \textbf{Certified multi-bottleneck bound.} We construct a certified multi-bottleneck lower bound from chain, machine-capacity, and fleet-capacity relaxations. It is admissible at every reachable partial schedule (the machine term
  under the type-pure eligibility of the studied layouts), and its
  tightness at termination makes its one-step change an
  objective-faithful reward (Section~\ref{sec:theory}).
\item \textbf{Bound-oriented learning and decision support.} BOLT reuses the bound for certified action prices and a solver-free bound on the relative suboptimality gap of a completed schedule. Its fleet term also acts as a redundant constraint that raises the lower bound CP-SAT proves in the tested one-vehicle cells (Sections~\ref{sec:method}, \ref{sec:experiments}).
\item \textbf{Operating regime and deployment.} On the training-size grid, explicit coupling shortens makespan in every cell at or above
  $\UtilWinMin\%$ loaded utilization, whereas a transport penalty is adequate at or below
  $\UtilLoseMax\%$, so the tested crossover lies between these values. Across five 50- and 80-module production batches outside the trained sizes, the policy outperforms the best dispatching-rule pair on four and loses by $\StMixLongPdrEightyLow\%$ where one vehicle serves 80 modules. Its factorized realization costs up to $\MixSaLeadMax\%$ in makespan there against one merged head, gives the station and vehicle
  controllers separate decision layers, and dispatches in milliseconds (Section~\ref{sec:experiments}).
\end{itemize}

\section{Related Work}\label{sec:related}

Table~S-I compares the closest works
by setting, bound structure, bound uses, and factorization. Several
learning studies below address dynamic or stochastic flexible job-shop scheduling
problems (FJSP), whereas ours is deterministic.

\emph{Bound-guided learning for job shops.} Learning-to-dispatch methods
for FJSP commonly derive a reward at each decision from an optimistic
completion-time estimate. Zhang et al.\ \cite{zhang2020l2d} use changes in a
chain-based lower bound as the reward, Song et al.\
\cite{song2023fjspdrl} extend this to FJSP, and DANIEL
\cite{wang2024daniel} pairs the same principle with a dual-attention
encoder. With undiscounted returns ($\gamma=1$), the cumulative reward
is a constant minus the realized makespan, and the time-lag
work~\cite{paper1anon} proves the same property for machine-free lags.
That work schedules machines alone, with no fleet and with every wait
fixed by the instance. This paper reuses its reward design, instance
generator, and lag features. It adds the resource-coupled transport
delay, the fleet-capacity term, the candidate prices, the root
certificate, the solver constraint, and the regime study. The bound
changes because a scarce fleet can become the makespan bottleneck that a
chain relaxation alone does not represent.
Moon et al.\ \cite{moon2024learning} extend this line to FJSP
with transport, but their bound omits the fleet. Deep reinforcement learning schedulers that select
dispatching rules~\cite{luo2022hmappo} react to disturbances without a bound. BOLT adds machine- and fleet-capacity
relaxations and proves the composite bound for the coupled
states defined here.

\emph{FJSP with transport and coupled resources.} Bilge and Ulusoy
\cite{bilge1995time} introduced simultaneous machine and fleet
scheduling, Berterotti\`{e}re et
al.\ \cite{berterottiere2024flexible} give a recent model and
instance family, and Yao et al.\ \cite{yao2025fjspla} give a
mixed-integer model for a limited automated guided vehicle (AGV) fleet. Hurink and Knust~\cite{hurink2005tabu} apply tabu search to a job shop with one transport robot, and Xin et al.~\cite{xin2024simultaneous} schedule machines and AGVs with a hybrid algorithm. All five schedule vehicles by per-instance search, and none provides an
analytic bound at each state that a dispatcher could reuse for reward,
action guidance, and solver propagation. Learned transport-aware
schedulers avoid this per-instance search. They select dispatching rules for an AGV fleet~\cite{li2023dynamic}, add transfer-time constraints to dynamic FJSP~\cite{zhang2024transport}, or pair learned sequencing with a rule for AGV assignment~\cite{cheng2025cooperative}. Table~\ref{tab:semantics}
compares the closest by transport, time-lag, and reservation semantics and the
signal that trains them.
\begin{table}[!t]
\centering
\caption{Learned transport-aware schedulers closest to BOLT. n.s.: not
specified in the published description.}
\label{tab:semantics}
\scriptsize
\setlength{\tabcolsep}{2.5pt}
\renewcommand{\arraystretch}{1.05}
\begin{tabular}{@{}p{0.145\columnwidth}p{0.225\columnwidth}p{0.125\columnwidth}p{0.22\columnwidth}p{0.20\columnwidth}@{}}
\toprule
\raggedright Work & \raggedright Transport & \raggedright Time lags &
\raggedright Destination machine at dispatch &
\raggedright Training signal\tabularnewline
\midrule
\raggedright Moon et al.~\cite{moon2024learning} &
\raggedright seized vehicle, empty and loaded legs & \raggedright none &
\raggedright chosen with the vehicle, blocked from dispatch &
\raggedright chain estimate, no fleet term, uncertified\tabularnewline
\raggedright Li et al.~\cite{li2025realtime} &
\raggedright AGV allocated per task & \raggedright n.s. &
\raggedright n.s. &
\raggedright hand-designed (machine idle time)\tabularnewline
\raggedright Zhang et al.~\cite{zhang2024dynamic} &
\raggedright transfer time, no vehicle agent & \raggedright n.s. &
\raggedright not applicable &
\raggedright hand-designed shaping\tabularnewline
\raggedright Wang et al.~\cite{wang2025hierarchical} &
\raggedright transbots serve a fixed job-machine sequence & \raggedright n.s. &
\raggedright assigned before the move, hold n.s. &
\raggedright hierarchical, n.s.\tabularnewline
\raggedright Yang et al.~\cite{yang2026hgamppo} &
\raggedright AGV selection & \raggedright n.s. &
\raggedright n.s. &
\raggedright hand-designed\tabularnewline
\raggedright Link et al.~\cite{link2026coordination} &
\raggedright seized AGV, empty leg, anticipatory pickup &
\raggedright none &
\raggedright fixed by the route, not held &
\raggedright sparse terminal, bound as normalizer\tabularnewline
\midrule
\raggedright BOLT (this work) &
\raggedright seized vehicle, empty and loaded legs &
\raggedright machine-free lags &
\raggedright committed at release, held idle until arrival &
\raggedright certified chain, machine and fleet bound\tabularnewline
\bottomrule
\end{tabular}
\end{table}
None of the listed works reports a machine-free lag or a training signal with a proven bound. Only Moon et al.~\cite{moon2024learning} also hold the
destination machine idle while the vehicle is in transit, as BOLT does.

\emph{Multi-agent and factorized industrial scheduling.} Resource-class
factorization is established in AGV-assisted and multi-agent
scheduling, and we compare this deployment choice with a
merged-head control (Section~\ref{subsec:parity}). Li et al.\ \cite{li2025realtime} run multi-agent proximal policy
optimization (MAPPO) over priority-rule actions with a learned central
critic. Zhang et al.\ \cite{zhang2024dynamic} train job and machine
agents whose transport is a transfer delay with no vehicle to seize.
Wang et al.\ \cite{wang2025hierarchical} add a job, machine, and robot
hierarchy, and Yang et al.\ \cite{yang2026hgamppo}, closest in
formulation, factor an AGV-assisted policy by resource class. None of
these four trains on a certified bound, which BOLT adds with its proof. Factorized control also raises credit assignment, for which counterfactual multi-agent policy gradients (COMA)~\cite{foerster2018coma} learn a per-agent critic that extends
difference rewards~\cite{wolpert2001difference}.

Link et al.\ \cite{link2026coordination} examine when \emph{joint}
training of job and AGV agents outperforms \emph{modular} training on
job shops without machine flexibility or waits. We compare explicit
coupling with a transport penalty on a shop that has both, and we
evaluate BOLT on their benchmark (Section~\ref{subsec:external}).

\emph{Positioning.} To our knowledge, no learned FJSP scheduler combines
the certified multi-bottleneck bound with certified action pricing, a
solver-free certified gap bound, and solver strengthening. Prior multi-agent rewards are hand-designed or use a bound only as a
static normalizer~\cite{zhang2024dynamic,link2026coordination}.

\ifdefined\definition\else\newtheorem{definition}{Definition}\fi

\providecommand{\Jset}{\mathcal{J}}
\providecommand{\Mset}{\mathcal{M}}
\providecommand{\Vset}{\mathcal{V}}
\providecommand{\Cmax}{C_{\max}}
\providecommand{\loc}{\operatorname{loc}}

\section{Resource-Coupled FJSP with Time Lags}\label{sec:problem}

We study flexible job-shop scheduling with time lags and transport
(FJSP-TL-T), a deterministic flexible job shop in which jobs
rely on a scarce vehicle fleet between processing steps. It extends
the machine-free time-lag setting of~\cite{paper1anon}.

\subsection{Formulation}\label{subsec:fjsptlt}

\emph{Base flexible job shop with time lags (FJSP-TL).} An instance has
jobs $\Jset=\{1,\dots,n\}$ and machines $\Mset=\{1,\dots,m\}$.
Job $i$ is a linear chain of
operations $O_{i1},\dots,O_{in_i}$, and assigning $O_{ij}$ to an
eligible machine $k\in\Mset_{ij}\subseteq\Mset$ takes time
$p_{ijk}>0$. Each machine runs one operation at a time, without
preemption. When $O_{ij}$
finishes, its machine is released, but the job cannot begin
$O_{i,j+1}$ until a machine-free lag $\ell_{ij}\ge0$ elapses. Such lags
model curing, ponding, and drying, which occupy the module but no
machine~\cite{paper1anon}. Let $\mu_{ij}$ be the machine chosen for
$O_{ij}$ and $C_{ij}$ its completion time.

\emph{Transport extension.} The shop layout places every
machine $k$ at a location, or cell, $\loc(k)$. Carrying a job
from cell $a$ to cell $b$ takes travel time $\tau(a,b)\ge 0$, with
$\tau(a,a)=0$, and an empty repositioning move takes
$\tau_e(a,b)$, equal here to $\tau(a,b)$. A homogeneous
fleet $\Vset$ serves all moves, one vehicle per move. Jobs sit at the
machine that ran their last completed operation and enter the shop at
the first operation's machine with no preceding transport.

\begin{definition}[Transport task]\label{def:transport}
A job transition $O_{ij}\rightarrow O_{i,j+1}$ between
different cells, $\loc(\mu_{ij})\neq\loc(\mu_{i,j+1})$, induces a
transport task $\rho=(i,a,b)$ from $a=\loc(\mu_{ij})$ to
$b=\loc(\mu_{i,j+1})$. The job becomes ready for the move at
$C_{ij}+\ell_{ij}$, and
the task is released when its destination machine is committed
(Section~\ref{subsec:decision}). A same-cell successor requires no
transport and receives the job in place.
\end{definition}

Serving $\rho$ occupies one vehicle for the entire move
(Fig.~\ref{fig:problem}). Vehicle $v$, idle at $z$ and free from $w$,
starts at $\max(w,\text{release}(\rho))$. It runs the empty move
$z\rightarrow a$ followed by the loaded move $a\rightarrow b$. The vehicle ends idle
and free at $b$, where the job becomes ready.

\emph{Objective.} A solution assigns and sequences each operation on
an eligible machine, and each transport task on the fleet. The
objective is to minimize the makespan
\begin{equation}\label{eq:makespan}
  \Cmax \;=\; \max_{i\in\Jset}\; C_{i n_i}.
\end{equation}

\subsection{A Taxonomy of Scheduling Delays}\label{subsec:taxonomy}

\begin{figure*}[!t]
\centering
\figframewide{\resizebox{0.97\linewidth}{!}{\definecolor{x2blf}{HTML}{A8C6E3}%
\definecolor{x2bln}{HTML}{4F81AD}%
\definecolor{x2amf}{HTML}{F3CF8F}%
\definecolor{x2amn}{HTML}{B8860B}%
\definecolor{x2gyf}{HTML}{C3CBD3}%
\definecolor{x2gyn}{HTML}{8A929B}%
\providecommand{\xFttl}{\fontsize{8.0}{9.0}\selectfont}%
\providecommand{\xFlane}{\fontsize{7.0}{8.0}\selectfont}%
\providecommand{\xFbar}{\fontsize{7.0}{8.0}\selectfont}%
\providecommand{\xFnote}{\fontsize{7.0}{8.0}\selectfont}%
\begin{tikzpicture}[x=1cm,y=1cm,font=\xFbar,
  every node/.style={text=black},
  proc/.style ={draw=x2bln,fill=x2blf,line width=0.5pt},
  vehl/.style ={draw=x2amn,fill=x2amf,line width=0.5pt},
  vehe/.style ={draw=x2amn,fill=x2amf!45,line width=0.5pt},
  resv/.style ={draw=x2amn,fill=none,line width=0.5pt},
  idle/.style ={draw=x2gyn,fill=x2gyf,line width=0.5pt},
  free/.style ={draw=black,fill=white,line width=0.5pt},
  ttl/.style  ={font=\xFttl\bfseries,inner sep=0pt,align=left},
  tg/.style   ={font=\xFlane,inner sep=0pt},
  nt/.style   ={font=\xFnote,inner sep=0pt,align=left},
  bl/.style   ={font=\xFbar,inner sep=0pt}]

\def\LH{0.62}          %
\def\YM{3.86}          %
\def\YJ{3.06}          %
\def\YV{2.26}          %
\def\YAX{1.98}         %
\def\YTIC{1.84}        %
\def\YNOTE{1.42}       %
\def\YTTL{4.62}        %

\newcommand{\tlbar}[5]{%
  \path[#4] ({\CX+\UU*(#1)},#3) rectangle ({\CX+\UU*(#2)},{#3+\LH});
  \node[bl] at ({\CX+\UU*((#1)+(#2))/2},{#3+0.5*\LH}) {#5};}
\newcommand{\tlhat}[4]{%
  \path[fill=x2gyf!40] ({\CX+\UU*(#1)},#3)
        rectangle ({\CX+\UU*(#2)},{#3+\LH});
  \path[pattern=north east lines,pattern color=x2gyn] ({\CX+\UU*(#1)},#3)
        rectangle ({\CX+\UU*(#2)},{#3+\LH});
  \draw[draw=x2gyn,line width=0.5pt] ({\CX+\UU*(#1)},#3)
        rectangle ({\CX+\UU*(#2)},{#3+\LH});
  \node[bl,fill=white,inner sep=0.6pt]
    at ({\CX+\UU*((#1)+(#2))/2},{#3+0.5*\LH}) {#4};}
\newcommand{\tlaxis}[1]{%
  \draw[draw=x2gyn,line width=0.4pt,-{Latex[length=2.4pt,width=2.0pt]}]
    ({#1+0.94},\YAX) -- ({#1+5.62},\YAX);
  \node[bl,anchor=north east] at ({#1+5.58},\YTIC) {time};}

\def\CX{0.94}\def\UU{1.45}
\node[ttl,anchor=south west] at (0.00,\YTTL) {Machine-free (curing lag)};
\node[tg,anchor=east] at (0.88,{\YM+0.5*\LH}) {Mch};
\node[tg,anchor=east] at (0.88,{\YJ+0.5*\LH}) {Job};
\tlbar{0}{1}{\YM}{proc}{$O_1$}
\tlbar{1}{3.2}{\YM}{free}{serves other jobs}
\tlbar{0}{1}{\YJ}{proc}{$O_1$}
\tlhat{1}{2.2}{\YJ}{$\ell_{ij}$}
\tlbar{2.2}{3.2}{\YJ}{proc}{$O_2$}
\tlaxis{0.00}
\node[nt,anchor=north west,text width=4.64cm] at (0.94,\YNOTE)
  {The job waits; the machine\\ is free to serve other jobs.};

\def\CX{7.22}\def\UU{1.45}
\node[ttl,anchor=south west] at (6.28,\YTTL) {Machine-side (setup)};
\node[tg,anchor=east] at (7.16,{\YM+0.5*\LH}) {Mch};
\node[tg,anchor=east] at (7.16,{\YJ+0.5*\LH}) {Job};
\tlbar{0}{1}{\YM}{proc}{$O_1$}
\tlhat{1}{2.2}{\YM}{setup}
\tlbar{2.2}{3.2}{\YM}{proc}{$O_2$}
\tlbar{0}{1}{\YJ}{proc}{$O_1$}
\tlbar{1}{2.2}{\YJ}{idle}{waits}
\tlbar{2.2}{3.2}{\YJ}{proc}{$O_2$}
\tlaxis{6.28}
\node[nt,anchor=north west,text width=4.64cm] at (7.22,\YNOTE)
  {The machine is blocked;\\ the job waits with it.};

\def\CX{13.50}\def\UU{0.909804}
\node[ttl,anchor=south west] at (12.56,\YTTL)
  {Resource-coupled (transport, this paper)};
\node[tg,anchor=east,align=right] at (13.44,{\YM+0.5*\LH}) {Mch\\(dest.)};
\node[tg,anchor=east] at (13.44,{\YJ+0.5*\LH}) {Job};
\node[tg,anchor=east] at (13.44,{\YV+0.5*\LH}) {Veh};
\tlbar{1.6}{4.4}{\YM}{resv}{reserved (idle)}
\tlbar{4.4}{5.1}{\YM}{proc}{$O_2$}
\tlbar{0}{0.9}{\YJ}{proc}{$O_1$}
\tlhat{0.9}{1.6}{\YJ}{$\ell_{ij}$}
\tlbar{1.6}{3.3}{\YJ}{idle}{wait}
\tlbar{3.3}{4.4}{\YJ}{vehl}{transit}
\tlbar{4.4}{5.1}{\YJ}{proc}{$O_2$}
\tlbar{1.95}{3.3}{\YV}{vehe}{empty $\tau_e$}
\tlbar{3.3}{4.4}{\YV}{vehl}{loaded $\tau$}
\tlaxis{12.56}
\node[nt,anchor=north west,text width=4.64cm] at (13.50,\YNOTE)
  {The job waits and a vehicle\\ is seized for the whole leg.};
\draw[draw=x2gyn,line width=0.4pt]
  ({\CX+\UU*1.6},{\YM+\LH}) -- ({\CX+\UU*1.6},\YAX);
\node[bl,anchor=north] at ({\CX+\UU*1.6},\YTIC) {release $C_{ij}+\ell_{ij}$};

\draw[draw=x2gyn,line width=0.5pt,dash pattern=on 2pt off 1.6pt]
  (12.21,5.00) -- (12.21,0.70);
\node[nt,anchor=east,font=\xFnote\itshape] at (12.01,5.18)
  {the instance fixes the wait};
\node[nt,anchor=west,font=\xFnote\itshape] at (12.41,5.18)
  {the schedule sets the wait};

\newcommand{\lgs}[3]{%
  \path[#2] (#1,0.26) rectangle (#1+0.30,0.56);
  \node[nt,anchor=west] at (#1+0.40,0.41) {#3};}
\lgs{2.61}{proc}{processing}
\path[fill=x2gyf!40] (4.76,0.26) rectangle (5.06,0.56);
\path[pattern=north east lines,pattern color=x2gyn]
  (4.76,0.26) rectangle (5.06,0.56);
\draw[draw=x2gyn,line width=0.5pt] (4.76,0.26) rectangle (5.06,0.56);
\node[nt,anchor=west] at (5.16,0.41) {instance-fixed wait};
\lgs{7.76}{idle}{waiting}
\path[vehe] (9.61,0.26) rectangle (9.76,0.56);
\path[vehl] (9.76,0.26) rectangle (9.91,0.56);
\node[nt,anchor=west] at (10.01,0.41) {vehicle seized (empty, loaded)};
\lgs{13.21}{resv}{reserved (idle)}

\end{tikzpicture}}}
\caption{The three delay-semantics classes
(Definition~\ref{def:taxonomy}), with
machine (Mch), job, and, where one is used, vehicle (Veh) lanes. Blue marks processing, hatching marks instance-fixed waits, gray marks the job waiting with no resource seized, amber marks the empty ($\tau_e$) and loaded ($\tau$) legs that seize the vehicle, and an outline marks the destination machine reserved idle until the job arrives. The vertical dashed rule marks the boundary of the class
studied here.}
\label{fig:problem}
\end{figure*}

\begin{definition}[Delay-semantics taxonomy]\label{def:taxonomy}
Let a delay be a mandatory wait attached to a scheduled entity. We
distinguish three semantics according to what the wait blocks and what
determines its duration. A \emph{machine-free} delay blocks the job
only, and its duration is fixed by the instance. The post-operation
lag $\ell_{ij}$ is of this type~\cite{paper1anon}. A
\emph{machine-side} delay blocks the machine and consumes no third
resource. Its duration is fixed by the instance, possibly as a
function of committed choices such as the preceding
job~\cite{allahverdi2015setup}. A \emph{resource-coupled} delay blocks
the job \emph{and} seizes a third resource whose availability the
schedule itself determines. Its effective duration, the time from the
job becoming ready to its arrival, is therefore set by the fleet's
assignment and sequencing across all jobs. Transport in FJSP-TL-T is
of this type.
\end{definition}

\emph{Why resource coupling requires a new bound.} In the first two
classes, every wait has a minimum duration fixed by the
instance~\cite{paper1anon}, and a chain recursion can use that
minimum for each uncommitted delay while remaining admissible and
tight. The same approach fails for a resource-coupled delay. A
per-transition minimum remains admissible but can be arbitrarily
loose, because it ignores
competition among jobs for the fleet. This
motivates the fleet-capacity component
(Remark~\ref{rem:fleet}).

\subsection{Scope and Modeling Assumptions}\label{subsec:scope}

Sequence-dependent setups and machine-availability constraints are
excluded to isolate the effect of the shared fleet. The model treats transport as occupancy
of a scarce resource with a tabulated travel time between cells, and it
models no road network, congestion, or route optimization. Vehicle charging, collisions,
heterogeneous fleets, station-buffer limits, and
factory-to-site delivery are outside the model. Station buffers are assumed unbounded. A finished module
therefore waits for its vehicle in the cell of its last machine, which
stays free for other operations. Blocking machines, which hold a
finished module until a vehicle collects it, and curing stages that
seize a capacity-limited resource create resource-coupled delays but
lie outside the model. Each lag $\ell_{ij}$ sets only a minimum wait, so maximum
time lags, which cap that wait, are outside both the taxonomy and the
model.

\subsection{Decision Model}\label{subsec:decision}

\emph{Machine-class event.} A job is decision-ready at
$C_{ij}+\ell_{ij}$ (time zero for an unstarted job). The machine-class
event occurs at
$t_M=\min_{(O_{ij},k)}\max(\text{ready}(O_{ij}),\text{free}(k))$
over eligible pairs. Its action space holds every eligible pair whose
own maximum is at most $t_M$, and committing a pair assigns the
operation to that machine. A same-cell successor
(Definition~\ref{def:transport}) then starts at once, because its
machine is free and its job present. By contrast, a different-cell successor
releases a transport task at $\max(\jr(i),b_m)$, the later of the
job-ready time and the machine's busy-until time. It also
\emph{reserves} its machine, which stays idle until the job arrives,
because a committed station cannot be reassigned in transit without
unloading. Its
processing begins on arrival.
The CP-SAT reference model searches a larger space with two different
timing rules. It drops the reservation,
and it lets a vehicle run its empty leg before the task is released and
wait at the pickup cell, while the environment starts the empty leg at
$\max(w,\text{release}(\rho))$. The gap to the solver therefore includes
both rules (quantified in Section~\ref{subsec:main}).
Relaxing the reservation under online dispatch remains future work.

\emph{Vehicle-class event.} A vehicle-class event occurs when a vehicle is
free and the pool of released, unserved transport tasks is non-empty. Its
action space is the pending tasks (at most one per job), and committing
dispatches that vehicle with the empty-plus-loaded timing above.

\emph{One decision per step and two event classes.} At each step,
pending events are ordered by a deterministic tie-break (time, class
with machine first, index), and the earliest event commits one
action. Each event class is served by one policy shared across its
resource instances~\cite{wang2024daniel}. Actions that would not
change the schedule are masked out, and a virtual ``pass'' in which
the event commits nothing is not available to the policy. It exists
only
as the counterfactual reference that defines the analytic credit
baseline of Section~\ref{sec:bcb}. Construction terminates when all completions
are realized, yielding the makespan of Eq.~\eqref{eq:makespan}. Both the construction and the bound of Section~\ref{sec:theory} depend only on the partial-schedule state. The same policy can therefore re-plan after a breakdown
by pricing the next event from the updated state
(Section~\ref{subsec:latency}).

\providecommand{\Cmax}{C_{\max}}
\providecommand{\Ccls}{\mathcal{C}}
\providecommand{\Fcls}{\mathcal{F}}
\providecommand{\elig}{\mathrm{elig}}
\providecommand{\cell}{\mathrm{cell}}
\providecommand{\Bchain}{B_{\mathrm{chain}}}
\providecommand{\Bmch}{B_{\mathrm{mch}}}
\providecommand{\Bveh}{B_{\mathrm{veh}}}
\providecommand{\jr}{\mathrm{jr}}

\section{The Certified Multi-Bottleneck Bound}\label{sec:theory}

BOLT reuses one lower bound on the makespan attainable from a partial
schedule. The bound is the maximum of three relaxations.

\subsection{Instance, State, and Completion Classes}

Fix one instance. $\elig(o)$ is the
eligibility set of operation $o$, $p(o,m)$ its processing time,
$p_{\min}(o)=\min_{m\in\elig(o)}p(o,m)$, and $\ell(o)$ its machine-free lag. Travel times need only be nonnegative with $\tau(a,a)=0$ and
$\tau_e\ge0$. No proof uses the triangle inequality, so every
guarantee holds on an arbitrary travel matrix, and $\tau_e=\tau$ here.
The static minimal travel envelope is
$\delta_{\min}(o)=\min_{a\in\cell(\elig(\mathrm{pred}\,o)),\,b\in\cell(\elig o)}\tau(a,b)$
($0$ for first operations), and a transition is \emph{mandatory} iff
$\delta_{\min}(o)>0$. For a move of $o$, $\tau_e$ is the empty leg to the pickup and $\tau_l$ the loaded leg.

\begin{assumption}[Type-pure eligibility]\label{as:pure}
For every $o$, all machines in $\elig(o)$ share one cell and can process exactly
the same operations, so $\elig(o)$ equals a single eligibility group
$g(o)$ of interchangeable parallel machines. This PPVC layout property holds by construction in the
generator.
\end{assumption}

\noindent Only the machine component $\Bmch$ needs
Assumption~\ref{as:pure}. Dropping
that component ($\chi_{\mathrm m}=0$ in Theorem~\ref{thm:main}) leaves
$\max(\Bchain,\Bveh)$, which carries all three guarantees below on
arbitrary eligibility sets.

At a partial-schedule state $s$ the operations partition into the
\emph{realized} (started, completion known), \emph{in-transit}
(committed to a different-cell machine, unserved), and
\emph{uncommitted} sets. Here $b_m$ is
machine $m$'s busy-until time, $f_v$ vehicle $v$'s free time, and
$\jr(i)$ job $i$'s ready time.

\begin{definition}[Constructive class $\Ccls(s)$]\label{def:C}
$\Ccls(s)$ is the set of completions the environment reaches from
$s$: every uncommitted operation gets an eligible machine and start, every
mandatory move a vehicle, respecting precedence, lags, eligibility,
machine no-overlap, and the reservation and service timing of
Section~\ref{subsec:decision} (task released at
$\mathrm{rel}(o)=\max(\jr(i),b_m)$, operation starting at
$\max(f_v,\mathrm{rel}(o))+\tau_e+\tau_l$). For $\pi\in\Ccls(s)$,
$\gamma_\pi(o)$ is the completion time,
$\Cmax(\pi)=\max_o\gamma_\pi(o)$.
\end{definition}

Committing shrinks continuations ($\Ccls(s')\subseteq\Ccls(s)$). The
constraint-programming solver CP-SAT searches the strictly larger space
$\Fcls(s)\supseteq\Ccls(s)$, which drops the reservation and lets a
vehicle start its empty leg before the task is released.

\subsection{Three Bottleneck Relaxations and Their Composition}

Each component isolates a resource that can determine the makespan.
The \emph{chain component} is
$\Bchain(s)=\max_o c(o)$, where the per-operation floor $c(o)$ is
seeded by the travel-aware chain recursion
$c(o)=c(o{-}1)+\ell(o{-}1)+\delta_{\min}(o)+p_{\min}(o)$. Each floor is raised at commitments together with the remaining
suffix of its chain, and an in-transit operation carries the optimistic
envelope $\hat c=\mathrm{rel}(o)+\tau_l+p(o,m)$. The \emph{machine-capacity
component} is
$\Bmch(s)=\max_g(\mathrm{busy}(g)+W_g)/|g|$ under
Assumption~\ref{as:pure}, with $\mathrm{busy}(g)=\sum_{m\in g}b_m$
and $W_g$ its group's uncommitted minimal
work. The \emph{fleet-capacity component}, the aggregate term that
resource coupling requires (Section~\ref{subsec:taxonomy}), is
$\Bveh(s)=\min_v f_v+W_{\mathrm{tr}}/|V|$, taken as $0$ when
$W_{\mathrm{tr}}\le0$. Here $W_{\mathrm{tr}}$ sums the exact loaded
durations of released-but-unserved tasks and the minimal envelopes
of uncommitted mandatory moves. Table~S-XXXII summarizes
what each component captures and relaxes.

\begin{remark}[Why a fleet component is necessary]\label{rem:fleet}
Take $K$ jobs whose first operations run in parallel, each on its own
machine in one cell, and whose successors all require the same
inter-cell move, served by one vehicle with loaded time $\tau$. A chain
relaxation charges at most one optimistic move along each job path, and
each machine group carries constant load, so
$\max(\Bchain,\Bmch)$ stays fixed as $K$ grows. Yet every feasible
schedule serializes the $K$ loaded moves on the one vehicle, so
$\Cmax\ge K\tau$. The fleet component closes this gap by aggregating
the mandatory transport workload over the vehicles:
$\Bveh=K\tau$ at the root (the exact instance is worked out as
Remark~S2 in the supplement). In general, every vehicle is free at the root, so
$\Bveh(s_0)=W_{\mathrm{tr}}/|V|$ divides the total minimal loaded travel
by the fleet size, which is the aggregate workload bound of energetic
reasoning for cumulative resources~\cite{baptiste1999satisfiability}. This paper adds its
admissibility at every reachable state with destination
reservations and its reuse as reward, action price, certificate, and
solver constraint.
\end{remark}

The maximum selects the active bottleneck. When machines or vehicles
are scarce, the corresponding term takes over, with no
tuned weight (confirmed in
Section~\ref{subsec:components}).

\begin{figure*}[t]
\centering
\figframewide{\IfFileExists{fig_bolt_body.tex}%
  {\resizebox{0.96\linewidth}{!}{\providecommand{\Bchain}{B_{\mathrm{chain}}}%
\providecommand{\Bmch}{B_{\mathrm{mch}}}%
\providecommand{\Bveh}{B_{\mathrm{veh}}}%
\providecommand{\Cmax}{C_{\max}}%
\definecolor{x2blf}{HTML}{A8C6E3}%
\definecolor{x2bln}{HTML}{4F81AD}%
\definecolor{x2amf}{HTML}{F3CF8F}%
\definecolor{x2amn}{HTML}{B8860B}%
\definecolor{x2gyf}{HTML}{C3CBD3}%
\definecolor{x2gyn}{HTML}{8A929B}%
\providecommand{\xFbig}{\fontsize{8.5}{9.5}\selectfont}%
\providecommand{\xFttl}{\fontsize{7.5}{8.5}\selectfont}%
\providecommand{\xFbdy}{\fontsize{7.0}{8.0}\selectfont}%
\begin{tikzpicture}[x=1cm,y=1cm,font=\xFbdy,
  every node/.style={text=black},
  bx/.style   ={draw,line width=0.5pt,rounded corners=1pt,align=center,
                inner sep=3pt,text=black},
  amb/.style  ={bx,fill=x2amf,draw=x2amn},
  ambo/.style ={bx,fill=white,draw=x2amn},
  blu/.style  ={bx,fill=x2blf,draw=x2bln},
  wht/.style  ={bx,fill=white,draw=black},
  arr/.style  ={-{Latex[length=2.6pt,width=2.2pt]},line width=0.45pt,
                draw=black},
  aarr/.style ={-{Latex[length=2.6pt,width=2.2pt]},line width=0.45pt,
                draw=x2amn},
  lin/.style  ={line width=0.45pt,draw=x2amn},
  sep/.style  ={line width=0.4pt,draw=x2gyn},
  lb/.style   ={inner sep=1pt,text=black}]

\draw[fill=white,draw=black,line width=0.5pt,rounded corners=1pt]
  (0,5.55) rectangle (18.00,6.30);
\coordinate (bt) at (0,6.13);
\coordinate (bb) at (0,5.72);
\node[lb,anchor=west] (sl) at (0.30,5.925)
  {\xFttl\bfseries Resource-coupled FJSP state};
\node[lb,anchor=west] (s1) at ([xshift=1.74cm]sl.east) {operations};
\node[lb,anchor=west] (s2) at ([xshift=1.74cm]s1.east) {machines};
\node[lb,anchor=west] (s3) at ([xshift=1.74cm]s2.east) {time lags};
\node[lb,anchor=west] (s4) at ([xshift=1.74cm]s3.east) {fleet};
\node[lb,anchor=west] (s5) at ([xshift=1.74cm]s4.east) {pending transport};
\coordinate (p0) at ([xshift=0.87cm]sl.east);
\coordinate (p1) at ([xshift=0.87cm]s1.east);
\coordinate (p2) at ([xshift=0.87cm]s2.east);
\coordinate (p3) at ([xshift=0.87cm]s3.east);
\coordinate (p4) at ([xshift=0.87cm]s4.east);
\draw[sep] (p0|-bt) -- (p0|-bb);
\draw[sep] (p1|-bt) -- (p1|-bb);
\draw[sep] (p2|-bt) -- (p2|-bb);
\draw[sep] (p3|-bt) -- (p3|-bb);
\draw[sep] (p4|-bt) -- (p4|-bb);

\draw[fill=x2amf!30,draw=x2amn,line width=0.6pt,rounded corners=1.5pt]
  (0,1.25) rectangle (6.60,5.17);
\node[anchor=north,font=\xFbig\bfseries] at (3.30,5.05)
  {Certified multi-bottleneck bound};
\node[anchor=north,font=\xFbdy\itshape] at (3.30,4.63)
  {admissible at every reachable state; tight at termination};

\node[amb,minimum width=2.62cm,minimum height=0.78cm,text width=2.36cm]
  (bc) at (1.57,3.74) {$\Bchain$\\[0.3pt]job chain with lags};
\node[amb,minimum width=2.62cm,minimum height=0.78cm,text width=2.36cm]
  (bm) at (1.57,2.82) {$\Bmch$\\[0.3pt]machine capacity};
\node[amb,minimum width=2.62cm,minimum height=0.78cm,text width=2.36cm]
  (bv) at (1.57,1.90) {$\Bveh$\\[0.3pt]fleet capacity};

\node[ambo,minimum width=1.06cm,minimum height=0.66cm] (mx)
  at (3.97,2.82) {$\max(\cdot)$};
\node[amb,minimum width=1.50cm,minimum height=1.10cm,line width=0.8pt]
  (bs) at (5.55,2.82) {\xFbig$B(s)$};

\draw[aarr] (2.88,3.74) -- (3.16,3.74) -- (3.16,3.05) -- (3.44,3.05);
\draw[aarr] (2.88,2.82) -- (3.44,2.82);
\draw[aarr] (2.88,1.90) -- (3.16,1.90) -- (3.16,2.59) -- (3.44,2.59);
\draw[aarr] (4.50,2.82) -- (4.80,2.82);

\draw[lin] (6.30,2.82) -- (7.05,2.82);
\draw[lin] (7.05,1.67) -- (7.05,4.75);
\draw[aarr] (7.05,4.75) -- (7.50,4.75);
\draw[aarr] (7.05,3.78) -- (7.50,3.78);
\draw[aarr] (7.05,2.64) -- (7.50,2.64);
\draw[aarr] (7.05,1.67) -- (7.50,1.67);

\node[ambo,minimum width=5.05cm,minimum height=0.84cm,text width=4.72cm]
  (u1) at (10.025,4.75)
  {{\xFttl\bfseries (i) Objective-faithful reward\\[0.5pt]}%
   $r_t=\kappa(B(s_t)-B(s_{t+1}))$};
\node[ambo,minimum width=5.05cm,minimum height=0.84cm,text width=4.72cm]
  (u2) at (10.025,3.78)
  {{\xFttl\bfseries (ii) Certified action price\\[0.5pt]}%
   $\Delta(s,a)=\max(B(s\!\cdot\!a)-B(s),0)$};
\node[ambo,minimum width=5.05cm,minimum height=0.84cm,text width=4.72cm]
  (u3) at (10.025,2.64)
  {{\xFttl\bfseries (iii) Solver-free certified gap bound\\[0.5pt]}%
   root bound $B(s_0)\le\mathrm{optimum}$};
\node[ambo,minimum width=5.05cm,minimum height=0.84cm,text width=4.72cm]
  (u4) at (10.025,1.67)
  {{\xFttl\bfseries (iv) Solver strengthening\\[0.5pt]}%
   fleet term $\Bveh$ added to the offline CP-SAT reference};

\draw[aarr] (12.55,4.75) -- (13.20,4.75);
\draw[aarr] (12.55,3.78) -- (13.20,3.78);
\draw[aarr] (12.55,2.64) -- (13.20,2.64);
\draw[aarr] (12.55,1.67) -- (13.20,1.67);

\node[blu,minimum width=4.80cm,minimum height=1.81cm,text width=4.46cm]
  (pol) at (15.60,4.265)
  {{\xFttl\bfseries Bound-guided policy\\[1.5pt]}%
   trained on the reward; reads the price\\ of every candidate action};
\node[wht,minimum width=4.80cm,minimum height=1.81cm,text width=4.46cm]
  (dsu) at (15.60,2.155)
  {{\xFttl\bfseries Decision support\\[1.5pt]}%
   certifies any schedule within\\ $(C-B(s_0))/B(s_0)$ of optimal,\\
   and lifts the solver's proven bound};

\draw[arr] (3.30,5.55) -- (3.30,5.17);
\node[lb,anchor=west] at (3.42,5.36) {state $s_t$};
\draw[arr] (15.60,5.17) -- (15.60,5.55);
\node[lb,anchor=east] at (15.48,5.36) {scheduling decision};

\draw[draw=x2gyn,line width=0.5pt,rounded corners=1pt,
      dash pattern=on 2pt off 1.6pt] (0,0) rectangle (18.00,0.95);
\coordinate (dt) at (0,0.72);
\coordinate (db) at (0,0.23);
\node[lb,anchor=west] (dl) at (0.30,0.475)
  {\xFttl\bfseries Deployment realization};
\node[lb,anchor=west] (d1) at ([xshift=2.11cm]dl.east) {merged head};
\node[lb,anchor=west] (d2) at ([xshift=1.87cm]d1.east)
  {factorized machine and vehicle heads};
\node[lb,anchor=west] (d3) at ([xshift=2.11cm]d2.east)
  {optional analytic per-agent credit};
\draw[{Latex[length=2.4pt,width=2.0pt]}-{Latex[length=2.4pt,width=2.0pt]},
      line width=0.45pt,draw=x2gyn]
  ([xshift=0.30cm]d1.east) -- ([xshift=-0.30cm]d2.west);
\coordinate (q0) at ([xshift=1.055cm]dl.east);
\coordinate (q1) at ([xshift=1.055cm]d2.east);
\draw[sep] (q0|-dt) -- (q0|-db);
\draw[sep] (q1|-dt) -- (q1|-db);

\end{tikzpicture}}}%
  {\parbox[c][3.2cm][c]{0.9\textwidth}{\centering\footnotesize Figure
   placeholder: BOLT framework schematic (fig\_bolt\_body, rendering)}}}
\caption{BOLT framework with one certified bound
$B=\max(\Bchain,\Bmch,\Bveh)$ and its four uses (i)--(iv). Amber marks the
bound and quantities derived from it, blue the learned policy, and the
dashed band optional deployment extensions.}
\label{fig:architecture}
\end{figure*}

\subsection{Guarantees}

\begin{proposition}[Component admissibility]\label{prop:components}
At every reachable $s$ and every $\pi\in\Ccls(s)$, each of $\Bchain(s)$,
$\Bmch(s)$, and $\Bveh(s)$ is at most $\Cmax(\pi)$. For every realized operation $o$, its chain floor satisfies
$c(o)=\gamma_\pi(o)$.
\end{proposition}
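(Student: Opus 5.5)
We prove the three admissibility claims separately, each by a relaxation argument that fixes an arbitrary $\pi\in\Ccls(s)$ and exhibits a quantity realized in $\pi$ that dominates the component. Throughout we use that $\pi$ extends $s$: every realized operation keeps its machine, start, and completion, and every machine busy-until time $b_m$ and vehicle free time $f_v$ at $s$ is a lower bound on when that resource can next be used in $\pi$.

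\emph{Chain component.} We prove by induction along each job chain that $c(o)\le\gamma_\pi(o)$ for every operation, with equality for realized $o$. Realized operations are the base case: their floors were raised at commitment to the known completion, so $c(o)=\gamma_\pi(o)$ by construction of the raising rule. For an in-transit $o$ committed to $m$, Definition~\ref{def:C} gives start $\max(f_v,\mathrm{rel}(o))+\tau_e+\tau_l\ge\mathrm{rel}(o)+\tau_l$ (since $\tau_e\ge0$), hence $\gamma_\pi(o)\ge\hat c$. For an uncommitted $o$, its predecessor completes no earlier than $c(o{-}1)$ by induction. The lag adds $\ell(o{-}1)$. Any move between the realized cells costs at least $\delta_{\min}(o)$, because the cells lie in $\cell(\elig(\mathrm{pred}\,o))\times\cell(\elig o)$, and the zero case covers same-cell successors. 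Processing takes at least $p_{\min}(o)$. Raised suffix floors are handled identically, with the raised predecessor value. Taking the maximum over $o$ gives $\Bchain(s)\le\Cmax(\pi)$.

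\emph{Machine component.} Under Assumption~\ref{as:pure}, fix a group $g$. Every uncommitted operation whose eligibility set is $g$ runs in $\pi$ on some $m\in g$ for at least $p_{\min}$, and machine $m$ is unavailable before $b_m$. So the total time $\sum_{m\in g}(\text{last completion on }m)\ge\mathrm{busy}(g)+W_g$, and the largest per-machine completion is at least the average $(\mathrm{busy}(g)+W_g)/|g|$. This is $\le\Cmax(\pi)$. One check is needed: a reserved in-transit machine has $b_m$ at most its true next availability, and its committed work is not double-counted in $W_g$. Type-purity is used exactly so that $W_g$ is charged only to $g$.

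\emph{Fleet component, the main obstacle.} Each released-but-unserved task occupies some vehicle in $\pi$ for at least its exact $\tau_l$. Each uncommitted mandatory move occupies one for at least its envelope, which is positive, so it genuinely requires a vehicle in $\pi$. Vehicle $v$ does no such work before $f_v$. Summing, $\sum_v(\text{last busy time of }v)\ge\sum_v f_v+W_{\mathrm{tr}}\ge|V|\min_v f_v+W_{\mathrm{tr}}$, so some vehicle is busy until at least $\Bveh(s)$.

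The delicate step is linking that vehicle end time to $\Cmax$. Every loaded leg ends with delivery to an operation, which then starts and completes later since $p>0$. Hence the last busy time of each vehicle is at most $\Cmax(\pi)$. We must also verify that no unserved or uncommitted move is already partially accounted for in $f_v$, which holds because $f_v$ only reflects dispatched tasks. Finally, when $W_{\mathrm{tr}}\le0$ the component is $0\le\Cmax$.
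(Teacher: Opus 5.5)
Your proposal is correct and takes essentially the paper's route, a separate relaxation argument per component: the travel-aware chain recursion with the in-transit envelope $\hat c$, a per-group energy bound that uses Assumption~\ref{as:pure} only so that $W_g$ is charged to a single group, and a fleet energy bound closed by noting that every loaded leg ends before the operation it delivers completes. The one point to tighten is the chain part: floors are raised at commitments, so run it as an induction over the commitment sequence using $\Ccls(s')\subseteq\Ccls(s)$, which shows each floor is at most the realized completion before raising and thereby proves $c(o)=\gamma_\pi(o)$ for realized $o$ rather than assuming it by definition.
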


BOLT's reward and action prices are
differences of $B$, so their meaning rests on the properties below.

\begin{theorem}[Admissibility, tightness, faithfulness]\label{thm:main}
Let $\chi_{\mathrm m},\chi_{\mathrm v}\in\{0,1\}$ select the machine and vehicle
components and set
$B(s)=\max(\Bchain(s),\chi_{\mathrm m}\Bmch(s),
\chi_{\mathrm v}\mathbf 1_{|V|>0}\Bveh(s))$. The main result takes
$\chi_{\mathrm m}=\chi_{\mathrm v}=1$.
\begin{enumerate}
\item[(a)] \emph{(Admissibility.)} For every reachable $s$ and every
$\pi\in\Ccls(s)$, $B(s)\le\Cmax(\pi)$.
\item[(b)] \emph{(Tightness.)} At termination $s_T$, $B(s_T)=\Cmax$, the
realized makespan.
\item[(c)] \emph{(Faithfulness.)} With per-step reward
$r_t=(B(s_t)-B(s_{t+1}))\,\kappa$ for an instance-fixed constant
$\kappa=1/(\overline p-\underline p+\varepsilon)>0$ ($\overline
p,\underline p$ the instance's extreme processing times,
$\varepsilon>0$ a guard) and undiscounted return
($\gamma=1$), $\sum_{t=0}^{T-1}r_t=(B(s_0)-\Cmax)\,\kappa$. Since $B(s_0)$ is
instance-fixed, $\argmax$ of the return equals $\argmin$ of $\Cmax$.
\end{enumerate}
\end{theorem}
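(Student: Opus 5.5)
Parts (a)--(c) follow in sequence from Proposition~\ref{prop:components}, a check of each component at the terminal state, and a telescoping sum.

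For (a), I would take Proposition~\ref{prop:components} as given: each of $\Bchain(s)$, $\Bmch(s)$ and $\Bveh(s)$ is at most $\Cmax(\pi)$ for every reachable $s$ and every $\pi\in\Ccls(s)$. Multiplying by a selector $\chi\in\{0,1\}$, or by the indicator $\mathbf 1_{|V|>0}$, gives either the same term or $0$. The value $0$ is trivially at most $\Cmax(\pi)$, because completion times are nonnegative. The maximum of finitely many quantities each bounded by $\Cmax(\pi)$ is bounded by $\Cmax(\pi)$, so $B(s)\le\Cmax(\pi)$. The indicator also covers the degenerate case $|V|=0$, where $\Bveh$ would involve division by zero. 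Under $\chi_{\mathrm m}=0$, Assumption~\ref{as:pure} is never invoked, which is what makes the reduced bound hold on arbitrary eligibility sets.

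For (b), I would show that $B(s_T)$ is squeezed between $\Cmax$ from both sides. At termination every operation is realized, so $\Ccls(s_T)$ contains exactly one completion, namely the constructed schedule. Part (a) then gives $B(s_T)\le\Cmax$. For the reverse inequality, the second claim of Proposition~\ref{prop:components} gives $c(o)=\gamma_\pi(o)$ for every realized $o$. Hence $\Bchain(s_T)=\max_o\gamma_\pi(o)=\Cmax$, and therefore $B(s_T)\ge\Bchain(s_T)=\Cmax$.

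The main obstacle is the two capacity terms at $s_T$. It suffices that neither exceeds $\Cmax$, but I would argue this directly. For $\Bmch(s_T)$, every $W_g=0$ and every $b_m\le\Cmax$, so $(\sum_{m\in g}b_m)/|g|\le\Cmax$. For $\Bveh(s_T)$, $W_{\mathrm{tr}}=0$ because no tasks are released-but-unserved and no mandatory moves remain uncommitted, so the term is $0$ by convention. This is consistent with (a), but the explicit check guards against any subtlety about $\Ccls(s_T)$ being a singleton.

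For (c), the sum telescopes:
\[
\sum_{t=0}^{T-1}\kappa\,(B(s_t)-B(s_{t+1}))=\kappa\,(B(s_0)-B(s_T)).
\]
Substituting (b) turns this into $\kappa\,(B(s_0)-\Cmax)$. Two facts finish the argument. First, $s_0$ depends only on the instance, so $B(s_0)$ is instance-fixed. Second, $\kappa>0$ is instance-fixed, since $\varepsilon>0$ and $\overline p\ge\underline p$. The return is therefore a strictly decreasing affine function of $\Cmax$, so maximizing the return over policies or schedules is the same as minimizing $\Cmax$. Termination after a finite $T$ is guaranteed because each step commits one action and the number of operations and transport tasks is finite, so the sum is finite and the telescoping is valid.
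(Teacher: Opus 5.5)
Your proposal is correct and follows essentially the paper's route. Part (a) is the maximum of the componentwise admissible terms of Proposition~\ref{prop:components}, with Assumption~\ref{as:pure} needed only when $\chi_{\mathrm m}=1$; part (b) combines the realized chain floors $c(o)=\gamma_\pi(o)$ with the capacity terms not exceeding the makespan at $s_T$ (your explicit $W_g=W_{\mathrm{tr}}=0$ check is a harmless redundancy); and part (c) is the telescoping sum with $B(s_0)$ and $\kappa>0$ instance-fixed.
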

\noindent\emph{Proof:} in the supplement. Remark~S4 there relates the
reward to potential-based shaping.

Faithfulness follows from (b) and from $B(s_0)$ being fixed by the
instance, so it would hold for any state function that equals the
makespan at termination. Admissibility (a) gives each intermediate
value its meaning, so each negative reward and each positive action
price is a proven rise of the makespan floor, and the root value bounds
the optimum (Corollary~\ref{cor:root}).

\subsection{Certified Action Prices and the Root Certificate}

Admissibility makes the bound a certified one-step decision
signal and a solver-free quality certificate.

\begin{corollary}[Admissible action prices]\label{cor:guide}
Let $s$ be reachable, let $a$ be a feasible commitment at $s$ (an
operation-machine assignment or a task-vehicle service), and let
$s\!\cdot\!a$ be its successor. Every completion of $s$ that begins
with $a$ lies in $\Ccls(s\!\cdot\!a)$, so Theorem~\ref{thm:main}(a) at
$s\!\cdot\!a$ gives
$\Cmax(\pi)\ge\max(B(s\!\cdot\!a),B(s))=B(s)+\Delta(s,a)$ for every
such $\pi$, with $\Delta(s,a)=\max(B(s\!\cdot\!a)-B(s),0)$ the action
price of Section~\ref{sec:guide}. Thus $B(s)+\Delta(s,a)$ is an admissible lower bound on every
completion that begins with $a$: a zero price certifies that the
commitment leaves the certified makespan floor intact, and a positive price is a proven floor increase.
\end{corollary}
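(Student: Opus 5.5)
The proof reduces to Theorem~\ref{thm:main}(a), applied twice, plus one structural fact about the constructive class. Fix a reachable $s$ and a feasible commitment $a$ at $s$. Let $\pi\in\Ccls(s)$ be any completion whose first committed action is $a$.

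\emph{Step 1: completions that begin with $a$ lie in $\Ccls(s\!\cdot\!a)$.} Here I would use Definition~\ref{def:C} directly. The state $s\!\cdot\!a$ differs from $s$ only in two respects: the operation of $a$ is fixed to its machine (or its task to its vehicle), and the induced quantities $b_m$, $f_v$, $\jr(i)$, $\mathrm{rel}(o)$ and in-transit status are updated by the environment's deterministic timing rules of Section~\ref{subsec:decision}. The completion $\pi$ realizes exactly these values for the committed item, because it obeys the same reservation and service timing. Its assignments and starts for every remaining operation and move therefore form a completion of $s\!\cdot\!a$ that respects precedence, lags, eligibility, no-overlap and timing. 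So $\pi\in\Ccls(s\!\cdot\!a)$.

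\emph{Step 2: reachability of the successor.} The successor $s\!\cdot\!a$ is reachable because $s$ is reachable and $a$ is feasible, so Theorem~\ref{thm:main}(a) applies there and gives $B(s\!\cdot\!a)\le\Cmax(\pi)$. Since $\pi\in\Ccls(s)$ as well, Theorem~\ref{thm:main}(a) at $s$ gives $B(s)\le\Cmax(\pi)$.

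\emph{Step 3: combining the two bounds.} Together these give $\Cmax(\pi)\ge\max(B(s\!\cdot\!a),B(s))$. The identity $\max(x,y)=y+\max(x-y,0)$ with $x=B(s\!\cdot\!a)$ and $y=B(s)$ yields $B(s)+\Delta(s,a)$. The two interpretive clauses then follow immediately:
\begin{itemize}
\item $\Delta=0$ means the certified floor for completions through $a$ is just $B(s)$;
\item $\Delta>0$ means every such completion has makespan at least $B(s)+\Delta>B(s)$.
\end{itemize}

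The only step with real content is Step~1, namely that committing $a$ neither removes nor alters any continuation that already began with $a$. The delicate case is a different-cell machine commitment. There the release time $\max(\jr(i),b_m)$ and the reservation of the destination machine must coincide with what $\pi$ does. This requires that $\Ccls(s)$ fixes release and service timing by the same rule as the environment, which Definition~\ref{def:C} states explicitly. I would check this by cases on the two event classes: an operation--machine assignment, either same-cell or with a released task, and a task--vehicle service. In the service case, the start time $\max(f_v,\mathrm{rel}(o))+\tau_e+\tau_l$ is forced.
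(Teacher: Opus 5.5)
Your proposal is correct and follows the paper's argument: you apply Theorem~\ref{thm:main}(a) at both $s$ and $s\!\cdot\!a$, take the maximum, and use $\max(x,y)=y+\max(x-y,0)$ to obtain $B(s)+\Delta(s,a)$. The only difference is cosmetic: the paper's one-line proof cites the inclusion $\Ccls(s\!\cdot\!a)\subseteq\Ccls(s)$ to justify using the bound at $s$, whereas you start from $\pi\in\Ccls(s)$ and spell out the membership $\pi\in\Ccls(s\!\cdot\!a)$, which the corollary's statement simply asserts.
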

\begin{proof}
$\Ccls(s\!\cdot\!a)\subseteq\Ccls(s)$, so Theorem~\ref{thm:main}(a)
holds at both states; take the maximum.
\end{proof}

\begin{corollary}[Root lower bound on the true optimum]\label{cor:root}
At the root $s_0$ (no in-transit operations, $b_m=f_v=0$) all three relaxations are
valid for the reservation-free space $\Fcls(s_0)$ as well, so
$B(s_0)\le\mathrm{opt}_{\Fcls}(s_0)\le\mathrm{opt}_{\Ccls}(s_0)$.
\end{corollary}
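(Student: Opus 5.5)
The plan is to show, for each of the three components, that its root value is at most $\Cmax(\pi)$ for every $\pi\in\Fcls(s_0)$. Then $B(s_0)\le\min_{\pi\in\Fcls(s_0)}\Cmax(\pi)=\mathrm{opt}_{\Fcls}(s_0)$. The second inequality $\mathrm{opt}_{\Fcls}(s_0)\le\mathrm{opt}_{\Ccls}(s_0)$ is immediate from $\Ccls(s_0)\subseteq\Fcls(s_0)$, since a minimum over a larger set can only be smaller. At the root the bound has a particularly simple form, and that form is what lets it extend beyond $\Ccls$. Nothing is realized or in transit, and $b_m=f_v=0$. Hence $\Bchain(s_0)$ is the pure static recursion $c(o)=c(o{-}1)+\ell(o{-}1)+\delta_{\min}(o)+p_{\min}(o)$, $\Bmch(s_0)=\max_g W_g/|g|$, and $\Bveh(s_0)=W_{\mathrm{tr}}/|V|$, where $W_{\mathrm{tr}}$ is the sum of $\delta_{\min}(o)$ over mandatory transitions. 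None of these quantities refers to reservations or to the release rule $\mathrm{rel}(o)$. Those are exactly the two features by which $\Fcls$ relaxes $\Ccls$.

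\emph{Chain.} Fix $\pi\in\Fcls(s_0)$ and argue by induction along each job's chain that $\gamma_\pi(o)\ge c(o)$. In $\pi$, operation $o$ cannot start before its predecessor completes and the lag $\ell(o{-}1)$ elapses. If the two machines lie in different cells, a loaded move of length $\tau(a,b)$ must also intervene, and $\tau(a,b)\ge\delta_{\min}(o)$ because $a$ and $b$ are cells of eligible machines. In the same-cell case, $\delta_{\min}(o)\le\tau(a,a)=0$ also holds. Processing then takes at least $p_{\min}(o)$. This uses only precedence, lags and loaded travel, all of which $\Fcls$ retains. Early empty legs only affect when the vehicle reaches the pickup, never the loaded duration.

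\emph{Machine.} Under Assumption~\ref{as:pure}, every operation in group $g$ runs on one of the $|g|$ machines, starting no earlier than time $0$, with total processing at least $W_g$. Some machine therefore finishes at or after $W_g/|g|$. Reservations only add idle time, so dropping them does not invalidate this averaging.

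\emph{Fleet.} Each mandatory transition forces, in every $\pi\in\Fcls(s_0)$, a loaded leg of length at least $\delta_{\min}(o)$ on some vehicle. A transition is mandatory precisely because $\delta_{\min}(o)>0$ means every eligible cell pair differs. Each loaded leg precedes the operation's processing, so it ends by $\Cmax(\pi)$. Vehicles are free from time $0$ and carry one move at a time. The total loaded time is therefore at most $|V|\Cmax(\pi)$, which gives $\Cmax(\pi)\ge W_{\mathrm{tr}}/|V|$. Allowing empty legs to run early only reorders vehicle occupancy and does not shorten loaded legs, so this remains valid. The case $W_{\mathrm{tr}}\le0$ is trivial.

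The main obstacle I expect is checking that the chain floor used in Theorem~\ref{thm:main} at $s_0$ really is the static recursion. That is, no root-level "raising" step uses the reservation timing or the $\max(f_v,\mathrm{rel})$ service rule. Beyond that, one must confirm that $\Fcls$ still seizes one vehicle per loaded move with no overlap. I would settle this by reading the definition of $\Fcls$ as the CP-SAT model: optional machine intervals, vehicle no-overlap on loaded intervals, and precedence with lags and travel. Each of the three arguments then uses only constraints present there.
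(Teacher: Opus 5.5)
Your proposal is correct and follows the paper's route. At $s_0$ the reservation- and release-dependent ingredients vanish: there are no in-transit envelopes, and $\mathrm{busy}(g)=f_v=0$. Each of $\Bchain$, $\Bmch$, $\Bveh$ then rests only on precedence, lags, minimal loaded travel and processing, and machine and vehicle no-overlap, all of which $\Fcls(s_0)$ retains, and the second inequality follows from $\Ccls(s_0)\subseteq\Fcls(s_0)$.
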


This holds at the root only, because a reservation can push $B$ above
$\mathrm{opt}_{\Fcls}$ once an operation is in transit (Remark~S3 in the
supplement). The solver model is built only at $s_0$, so no reported
quantity depends on the difference. For any completed
schedule of makespan $C$, the root bound certifies $C$ as within
$(C-B(s_0))/B(s_0)$ of optimal without running a solver.

\section{BOLT: Bound-Oriented Learning}\label{sec:method}

\subsection{Framework Overview}
\label{sec:overview}

BOLT uses
the certified bound of Section~\ref{sec:theory} throughout training and
deployment (Fig.~\ref{fig:architecture}). At each event, the environment
exposes the partial schedule and the feasible commitments, and the bound
supplies the reward and one price per candidate. These signals require
no particular policy architecture, so the merged-head and factorized
realizations of Section~\ref{sec:realization} consume the same signals
through one dual-attention encoder.

\subsection{Objective-Faithful Bound-Difference Reward}
\label{sec:reward}

Following bound-difference rewards for job
shops~\cite{zhang2020l2d,song2023fjspdrl,paper1anon}, BOLT emits
at each decision the difference
$r_t=\kappa\,(B(s_t)-B(s_{t+1}))$, with $\kappa$ the positive
instance-fixed scale of Theorem~\ref{thm:main}. This construction is standard, and the contribution is the certified
composite bound under resource coupling. The undiscounted return is a
scaled and shifted $-\Cmax$, so maximizing it minimizes
makespan exactly
(Theorem~\ref{thm:main}(c)).

\subsection{Certified Action Prices}
\label{sec:guide}

The central learning mechanism in BOLT is candidate-specific bound
guidance. For a feasible commitment $a$ at state $s$ with successor
$s\!\cdot\!a$, BOLT defines the
action price $\Delta(s,a)=\max(B(s\!\cdot\!a)-B(s),0)$. The price is
a one-step lookahead of the bound, equal to the positive part of the
reward cost $-r_t/\kappa$ that the commitment would incur. It differs
from a heuristic penalty because the successor bound is admissible.
By Corollary~\ref{cor:guide}, $B(s)+\Delta(s,a)$ is a valid lower
bound on any schedule beginning with that commitment. All candidate
prices are computed in one pass, and the policy reads them as one input
channel, never as a mask or a reward term. The policy may still
select an apparently expensive action when other learned features
justify it. Admissibility does not itself certify that the ranking improves
schedules, which Section~\ref{subsec:credit} tests separately.

\subsection{Optional Analytic Credit}
\label{sec:bcb}

The bound also yields an optional analytic credit for the head that
acts at each step. It subtracts the reward of the virtual pass state of
Section~\ref{subsec:decision},
$r_t^{\mathrm{pass}}=(B(s_t)-B(s_t^{\mathrm{pass}}))\,\kappa$, from the
generalized-advantage-estimation advantage~\cite{schulman2016gae}.
Because $r_t^{\mathrm{pass}}$ does not depend on the sampled action, the
subtraction leaves the gradient estimator unbiased (lemma in the
supplement), and Section~\ref{subsec:credit} finds that it gives no
scheduling gain.

\subsection{Policy Realization and Deployment}
\label{sec:realization}

BOLT has two actor heads over one shared encoder, $\pi_M$ for
machine-class events and $\pi_V$ for vehicle-class events. The encoder
follows DANIEL's dual-attention design~\cite{wang2024daniel} over
operation and machine nodes. Vehicles enter the machine-side attention
stream as a third node set, projected to machine-feature width with a
learned type embedding. Operation nodes retain the lag channels
of~\cite{paper1anon}, and both node types gain anticipatory transport
channels. Exactly one head acts at each event, so training is proximal
policy optimization (PPO) over two event-class heads with one critic
that reads the pooled embedding. We use the MAPPO
implementation~\cite{yu2022mappo}, which reduces to this case when one
head acts per step.

The two heads mirror how plants assign these decisions. A
manufacturing execution system typically dispatches stations, and a
vehicle or crane controller dispatches the fleet. Separate output heads
give each decision class its own decision-layer parameters, so one
class can be fine-tuned, validated, or rolled back while the other
class's head stays fixed. The encoder is shared, so an encoder change
affects both classes. Each controller runs a replica of the network,
receives a broadcast of the shop state, and acts only on its own event
class. A controller whose learned decisions are withdrawn can therefore
fall back to a dispatching rule while the other keeps its learned
decisions (Section~\ref{subsec:parity}). The merged head, drawn on one controller in
Fig.~\ref{fig:deploy}, can be replicated in the same way, and the factorized
form differs only in that the two classes share no output layer. No experiment here measures the value of
this structural property.

\providecommand{\Cmax}{C_{\max}}
\providecommand{\Vset}{\mathcal{V}}
\providecommand{\Bchain}{B_{\mathrm{chain}}}
\providecommand{\Bmch}{B_{\mathrm{mch}}}
\providecommand{\Bveh}{B_{\mathrm{veh}}}

\providecommand{\resultfig}[2]{%
  \IfFileExists{#1}%
    {\includegraphics[width=\linewidth]{#1}}%
    {\fbox{\parbox[c][3.4cm][c]{0.92\linewidth}{\centering\footnotesize
       Figure placeholder\\[2pt]#2\\[2pt](rendered when results land)}}}}

\section{Experiments}\label{sec:experiments}

\begin{figure}[!t]
\centering
\figframe{%
\includegraphics[width=\linewidth]{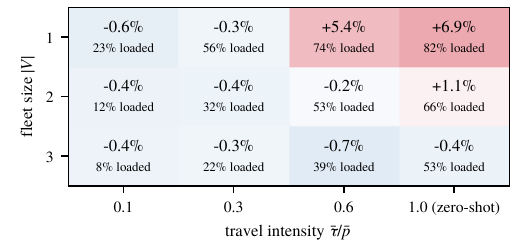}\\[1pt]
{\footnotesize (a)}\\[2pt]
\includegraphics[width=\linewidth]{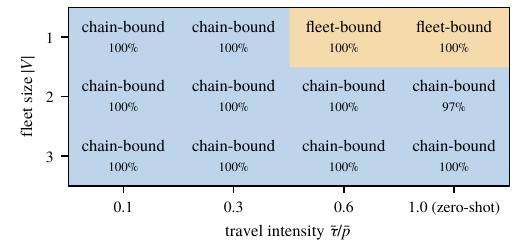}\\[1pt]
{\footnotesize (b)}}
\caption{(a) Coupling-regret map: relative makespan penalty of the
penalty anchor, a transport-penalty policy (one seed, half budget), against BOLT
(\Nseeds{}-seed mean; $n=\Ninst{}$ per cell, \primarysize).
The second line of each cell is the minimum loaded utilization. (b) The bound component attaining the root maximum, per cell (share of instances; blue chain, amber fleet); the
machine component never attains the maximum on this grid.}
\label{fig:coupling}
\end{figure}

\subsection{Experimental Setup}\label{subsec:setup}

\emph{Benchmark and regime grid.} The primary benchmark is the
PPVC-T generator.\footnote{Generator, instance data, baseline
references, trained models, and evaluation code: \url{https://github.com/NTUZZH/FJSP-TLT-MARL}.} It is based on the FJSP-TL
generator of~\cite{paper1anon}, whose routing classes and operation
structure follow the Singapore Building and Construction Authority's
PPVC documentation~\cite{bca2024ppvc}. The instances are therefore
industry-motivated, and none is taken from a plant's production
records. Most station types have two to four parallel stations, and
four routing classes visit them in different orders, which makes the
plant a flexible job shop. We add
a $3\times3$ station-cell layout with Manhattan travel matrix $\tau$
and a homogeneous fleet, keeping lag ranges unchanged. Policies are trained
once, on a mixture of fleet sizes $|V|\in\gridfleetset$ and travel
intensities $\bar\tau/\bar p\in\{0.1,0.3,0.6\}$ (mean shortest travel of a mandatory move
over mean processing time). The training size is \primarysize{} (jobs
$\times$ machines; 22 operations per job).
All test routes have 22 operations because the generator models each
documented routing class as 22 station visits.
The 20- and 30-module transfer cells bracket the per-floor batch of a
documented PPVC tower~\cite{unitedtec2022avenuesouth}, and the 50- and
80-module production batches cover two to three of its floors. Evaluation adds held-out fleet sizes, instance sizes, and travel intensities (Section~\ref{subsec:scale}). Each training-size cell holds \Ninst{} test instances and each production-batch cell holds 20, and the repository lists the fixed generator seeds of every test set.

\emph{What the travel setting means, and what binds.} A processing
step averages \ProcHr~h. Actual moves between different cells run
slightly longer than the shortest travel, about
\MoveMinTOne~min at $\bar\tau/\bar p=0.1$, \MoveHrTSix~h at $0.6$, and
\MoveHrTTen~h at $1.0$. The two longer settings exceed moves inside one
hall and represent moves between separate production areas or a slow
fleet, and none is calibrated to a plant's travel times. In the tested cells, BOLT outperforms
the penalty anchor of Section~\ref{subsec:regret} only with these
longer moves. With short moves ($0.1$), both the anchor and the GA are
at least as good as BOLT (Table~S-II, Fig.~\ref{fig:coupling}(a)). We describe fleet load by the minimum
loaded utilization, the share of fleet time spent carrying loads by any
schedule as short as the best one found. It runs from \UtilGridLow\% to
\UtilGridHigh\% over the grid and from \UtilEighty\% to
\UtilFiftyTTen\% on the five production batches
(Tables~\ref{tab:scale} and~S-V).

\emph{Baselines.} The first family contains
nine priority dispatching rule (PDR) pairs, three machine rules
crossed with three vehicle rules, named in the supplement. The second is a genetic algorithm (GA) whose random-key
chromosome holds operation priorities, machine preferences, and the
weights of a state-dependent vehicle rule. It starts from the
nine PDR pairs and uses population \gapopsize{}, elitism 5, tournaments
of two, uniform crossover, and Gaussian mutation (rate and scale 0.1).
Its search stops after \gawall{}~CPU-s beyond seeding unless stated
otherwise. Every chromosome is decoded through the same simulator, so
the GA searches the same constructive class as BOLT (specification in
the supplement, measured cost in Fig.~S-1). The third is a merged-head control. It
makes the same decisions through one decision head, with
the same encoder, signals, and
budget~\cite{song2023fjspdrl,moon2024learning}. The fourth is a
warm-started CP-SAT reference (OR-Tools 9.14, default parameters,
\cpbudget{}~s per instance, four workers, one run per instance). It is seeded from the best-of-nine PDR solution
and has the fleet term of Section~\ref{subsec:cert} as a redundant
constraint (model in the supplement). This time-limited reference
weakens at production scale. Without the warm start, it found no
feasible schedule within an hour on \ColdFiftyNone{} probed 50-module
instances, and at 80 modules its own proven gap after an hour
averages \ScCpOwnGapEighty\%. Existing vehicle-aware
learned schedulers do not support machine-free lags with these transport
and reservation semantics, so reproducing them on PPVC-T would change
their problem. BOLT is compared with published learned policies on the
independently released benchmark of~\cite{link2026coordination}
(Section~\ref{subsec:external}).

\emph{Protocol.} Every arm reported here is trained on \Nseeds{} seeds
\seedset{} and its makespans averaged per instance. Single-run arms (seed
301) are marked where they appear.
Comparisons are paired at the instance level, with each instance's
outcome the mean over the training seeds, so the tests quantify
instance variation. A two-level bootstrap over training seeds and instances excludes zero in \BootExclZeroRows{} of the \SeedRowsN{} headline comparison cells, and the other cells are not distinguishable at the seed level (Table~S-XI).
The best dispatching-rule pair is selected per cell on the same instances,
which favors the rule and makes that comparison conservative for BOLT.
We use Wilcoxon signed-rank tests at $\alpha=\siglevel{}$, Holm-corrected within each pre-specified cell family, and report per-instance win/tie/loss counts (W/T/L).
Parity claims use two one-sided tests (TOST) with a pre-specified
$\pm2\%$ margin, and all mechanism and equivalence claims follow
pre-specified decision rules (rules and dates in the supplement). Training is undiscounted
($\gamma=1$, as Theorem~\ref{thm:main}(c) requires) and runs
\TrainUpdates{} PPO updates
(\TrainGpuHours{} GPU-hours) per arm. A new station or operation type
has no learned type embedding, so it needs retraining at this cost
(Table~S-XIII). Within the tested ranges of
Sections~\ref{subsec:scale} and~\ref{subsec:components}, fleet size,
travel intensity, station count, and, after size-mixture training,
instance size can change without retraining. The best dispatching-rule pair
still outperforms the policy in three tested settings (the 10-module
policy at 15 modules with moderate travel, one vehicle for 80 modules,
and wide processing-time dispersion at moderate travel). An independent validator checks
every schedule (protocol and configuration in the supplement).

\subsection{Comparison Study}\label{subsec:comparison}

\subsubsection{Scheduling Quality on the Regime Grid}\label{subsec:main}

BOLT yields shorter makespans than the best of nine dispatching-rule pairs
in every cell of the primary grid
(Table~\ref{tab:main}). Margins are largest with one vehicle and heavy
travel (W/T/L $\qWtlVOneTSix$ at $|V|=1$,
$\bar\tau/\bar p=0.6$). Per-instance
search provides a different comparison. Given 60~CPU-s, the PDR-seeded
GA outperforms the one-shot policy at this training
size by $\qGaSpreadLow$ to $\qGaSpreadHigh\%$ on heavy cells
($\qWtlGaVOneTSix$; Fig.~S-1). Best-of-\SampN{} sampling narrows
that margin to $\SampDGaLow$--$\SampDGaHigh\%$ (Table~S-IV), so where a
minute of per-instance search is affordable, search remains the better
choice. The CP-SAT incumbents of Table~\ref{tab:main} are shorter still.
Re-timing BOLT's own assignments and sequences under the solver's two
timing rules (Section~\ref{subsec:decision}) shortens its schedules by
\RtImpMean\% on average and closes about \RtGapShare\% of that gap
(Table~S-XII). The reservation accounts for
\RtResShare\% of makespan and the early empty legs for \RtEmptyShare\%.
What remains of the gap mixes policy quality with sequences that the
constructive class cannot reach. Policy quality is part of it, because
the GA searches the same class and still outperforms BOLT.

\begin{table*}[!t]
\centering
\caption{Makespan on the FJSP-TL-T regime grid at \primarysize{} (jobs $\times$ machines): hours, mean$\pm$std ($n=\Ninst{}$ per cell; learned arms: per-instance \Nseeds{}-seed means). Bold: shortest makespan in the row. Light-travel column and merged-head control: supplement.}
\label{tab:main}
\renewcommand{\arraystretch}{0.86}
\setlength{\tabcolsep}{6pt}
\begin{threeparttable}
\scriptsize
\begin{tabular*}{\textwidth}{@{\extracolsep{\fill}}cc ccc cc@{}}
\toprule
\multicolumn{2}{c}{Regime cell} &
\multicolumn{3}{c}{Online dispatch} &
\multicolumn{2}{c}{Solver reference} \\
\cmidrule(lr){1-2}\cmidrule(lr){3-5}\cmidrule(lr){6-7}
$|V|$ & $\bar\tau/\bar p$ &
Best PDR\tnote{a} & BOLT & W/T/L\tnote{b} &
CP-SAT\tnote{c} & $\Delta$\%\tnote{d} \\
\midrule
$1$ & $0.3$ & \qPdrVOneTThree & \qOursVOneTThree & \qWtlVOneTThree & \textbf{\qCpVOneTThree} & \qGapVOneTThree \\
$1$ & $0.6$ & \qPdrVOneTSix & \qOursVOneTSix & \qWtlVOneTSix & \textbf{\qCpVOneTSix} & \qGapVOneTSix \\
$1$ & $1.0^{\ast}$ & \qPdrVOneTTen & \qOursVOneTTen & \qWtlVOneTTen & \textbf{\qCpVOneTTen} & \qGapVOneTTen \\
\midrule
$2$ & $0.3$ & \qPdrVTwoTThree & \qOursVTwoTThree & \qWtlVTwoTThree & \textbf{\qCpVTwoTThree} & \qGapVTwoTThree \\
$2$ & $0.6$ & \qPdrVTwoTSix & \qOursVTwoTSix & \qWtlVTwoTSix & \textbf{\qCpVTwoTSix} & \qGapVTwoTSix \\
$2$ & $1.0^{\ast}$ & \qPdrVTwoTTen & \qOursVTwoTTen & \qWtlVTwoTTen & \textbf{\qCpVTwoTTen} & \qGapVTwoTTen \\
\midrule
$3$ & $0.3$ & \qPdrVThreeTThree & \qOursVThreeTThree & \qWtlVThreeTThree & \textbf{\qCpVThreeTThree} & \qGapVThreeTThree \\
$3$ & $0.6$ & \qPdrVThreeTSix & \qOursVThreeTSix & \qWtlVThreeTSix & \textbf{\qCpVThreeTSix} & \qGapVThreeTSix \\
$3$ & $1.0^{\ast}$ & \qPdrVThreeTTen & \qOursVThreeTTen & \qWtlVThreeTTen & \textbf{\qCpVThreeTTen} & \qGapVThreeTTen \\
\bottomrule
\end{tabular*}
\begin{tablenotes}\footnotesize
\item[a] Best of the nine priority-dispatching-rule (PDR) pairs per
  cell. \textsuperscript{b}~W/T/L: instances where BOLT is shorter, tied, or longer than the best PDR pair; every cell is significant (paired one-sided Wilcoxon, Holm over the cells, largest adjusted $p=\qPdrHolmMax$).
\item[c] Warm-started CP-SAT (OR-Tools), \cpbudget{}~s/instance;
  incumbent shown. \textsuperscript{d}~$\Delta$: BOLT vs.\ the solver's incumbent, an
  upper bound (UB) on the optimum, $(\text{BOLT}-\text{UB})/\text{UB}$;
  not an optimality gap.
  \textsuperscript{$\ast$}~Zero-shot: outside the training mixture.
\end{tablenotes}
\end{threeparttable}
\end{table*}

\subsubsection{BOLT Remains Competitive under Zero-Shot Transfer to
Production Batches}\label{subsec:scale}

Three held-out directions test generalization
(Table~\ref{tab:main} zero-shot rows and Table~S-II). With a fourth
vehicle, a fleet size outside training, the 10-module policy's
makespan stays at or slightly below the best PDR pair's. At $15\times25$, the best PDR pair
overtakes both realizations of the 10-module policy at moderate but
not heavy travel, and that policy's margin shrinks as instances grow
(Fig.~S-3). Retraining on a $10/15/20$-module mixture
with the same architecture, reward, and budget puts the policy's mean
makespan below the best pair's in every cell of Fig.~S-3, from 15 to 80
modules.

\begin{table}[!t]
\centering
\caption{Makespan in hours on the 50- and 80-module production batches ($n=20$ per cell;
policy: per-instance \Nseeds{}-seed means). Bold: shortest makespan
in the row. The GA comparison is in Table~S-IV.}
\label{tab:scale}
\renewcommand{\arraystretch}{0.90}
\setlength{\tabcolsep}{2.6pt}
\begin{threeparttable}
\scriptsize
\begin{tabular}{@{}lc cc cc@{}}
\toprule
& & \multicolumn{2}{c}{Online} & \multicolumn{2}{c}{Solver reference} \\
\cmidrule(lr){3-4}\cmidrule(lr){5-6}
Cell & $u$ &
Best PDR & BOLT & CP-SAT 1800~s & CP-SAT 3600~s \\
\midrule
$50$, $2$, $0.6$ & \UtilFiftyTSix\% & \ScPdrFiftyTSix & \ScMixFiftyTSix & \ScCpHalfFiftyTSix & \textbf{\ScCpFiftyTSix} \\
$50$, $2$, $1.0$ & \UtilFiftyTTen\% & \ScPdrFiftyTTen & \textbf{\ScMixFiftyTTen} & \ScCpHalfFiftyTTen & \ScCpFiftyTTen \\
$80$, $3$, $1.0$ & \UtilEighty\% & \ScPdrEighty & \textbf{\ScMixEighty} & \ScCpHalfEighty & \ScCpEighty \\
\bottomrule
\end{tabular}
\begin{tablenotes}\footnotesize
\item Cell: modules, $|V|$, $\bar\tau/\bar p$ (25 machines); every
  cell is zero-shot (training: 10--20 modules, $\bar\tau/\bar
  p\le0.6$). BOLT vs.\ best PDR: W/T/L $\ScMixWtlFiftyTSix$,
  $\ScMixWtlFiftyTTen$, $\ScMixWtlEighty$, all
  $p\le\ScMixPdrPvalMax$, and each training seed alone is shorter than
  the best pair on average in every cell; vs.\ CP-SAT 3600~s at $50$, $2$, $1.0$:
  $\ScMixWtlCpFifty$, $p=\ScMixCpPval$. CP-SAT: warm-started, four workers, one run per
  instance; the 1800-s column holds the same runs' half-budget
  incumbents, and every verdict matches. $u$: minimum
  loaded utilization $W_{\mathrm{tr}}/(|V|\,C)$ (supplement).
\end{tablenotes}
\end{threeparttable}
\end{table}

\emph{On the production batches of Table~\ref{tab:scale}, the
size-mixture policy is shorter than the best dispatching-rule pair on every
instance}, by $\ScMixDPdrFiftyTSix\%$ and $\ScMixDPdrFiftyTTen\%$ at
50 modules and $\ScMixDPdrEighty\%$ at 80 modules. Two one-vehicle cells with short moves load the fleet as heavily
(column $u$ and Table~S-V). There the policy is $\StMixDPdrFiftyLow\%$
shorter than the best pair at 50 modules and $\StMixLongPdrEightyLow\%$
longer at 80 modules (W/T/L $\StMixWtlEightyLow$), where one vehicle
serves \StSparseEightyLow{} times as many modules as in training.
Against the hour-budget CP-SAT solver in Table~\ref{tab:scale}, the
policy's makespan is longer in the moderate 50-module cell and shorter
in the two heavier cells. At the heavier 50-module cell, the solver's 3600-s incumbents average
$\ScCpNoMatchDef\%$ above the policy, but one of the three training
seeds is longer than the solver there (Table~S-XI). In the 80-module cell,
the policy's mean makespan is $\ScMixDCpEighty\%$ below the incumbents'
(W/T/L $\ScMixWtlCpEighty$, $p=\ScMixPvalCpEighty$). This time-limited
reference weakens at production scale (Section~\ref{subsec:setup}), so
the comparison shows that a one-shot policy produces stronger
incumbents than this reference, and it does not show that the policy
approaches the optimum.

More compute per instance shortens the schedules of both the policy and
the search. Best-of-\SampN{} sampling widens the policy's margin over
the hour-budget solver to $\SampMixDCpEighty\%$ at 80 modules, at
\SampWallFifty--\SampWallEighty~s per instance. With 60~CPU-s of search
after seeding, the GA lies between greedy and best-of-\SampN{} decoding
at these sizes. It leads at 50 modules when its budget is raised to
ten minutes (Tables~S-IV and~S-XXIV; incumbent against compute in
Figs.~S-1 and~S-2). Within its online budget
the trained policy produces the stronger incumbents at these
sizes, whereas longer per-instance search can recover the advantage.

\subsubsection{External Benchmark as a Consistency Check}
\label{subsec:external}

An external benchmark checks whether the scarce-fleet result carries
over to an independently released simulator. We use the benchmark of
Link et al.~\cite{link2026coordination}, whose generator, simulator,
heuristics, and learned models are public. It compares BOLT with their
learned schedulers and heuristics, not with the penalty anchor, so it does
not test the utilization boundary. The comparison does not rank BOLT
either, because
the schedulers were trained under different budgets.
Kolmogorov--Smirnov tests do not reject our port of their generator
against their own code in any of \LkKsCombos{} combinations ($n=100$ per
side, non-rejection only; supplement). Their job shop has no machine flexibility, so
Assumption~\ref{as:pure} holds trivially and the bound's guarantees
apply. We trained our policy (\Nseeds{} runs, per-instance seed means)
on their distribution ($15\times10$, fleet sizes mixed over
$|V|\in\{3,6,\dots,18\}$) and scored it by replaying its decision
sequences in \emph{their released simulator}. Their executor times our
decision sequences and does not hold machines, so our reservation does
not apply there. Comparisons are unpaired, against the three strongest released rows
per fleet size.

\begin{table}[t]
\centering
\caption{External benchmark (Link et al.~\cite{link2026coordination}), $15\times10$, makespan in simulator units (mean~$\pm$~std, $n=\Ninst{}$; per-instance means over \Nseeds{} runs, every seed agreeing in sign). $\Delta$: relative difference to their joint model, best modular pair, and best heuristic (negative favors BOLT; bold: the fleet size where BOLT outperforms all). All differences significant (Mann--Whitney; at $|V|=3$ all $p\le\LkVThreePvalMax$).}
\label{tab:link}
\begin{threeparttable}
\footnotesize
\setlength{\tabcolsep}{4.5pt}
\scriptsize
\begin{tabular}{@{}rcccc@{}}
\toprule
$|V|$ & BOLT & $\Delta$ joint & $\Delta$ modular & $\Delta$ heuristic \\
\midrule
$3$ & \textbf{\LkOursVThree} & \textbf{\LkJointDVThree} &
  \textbf{\LkModDVThree} & \textbf{\LkHeurDVThree} \\
$6$ & \LkOursVSix & \LkJointDVSix & \LkModDVSix & \LkHeurDVSix \\
$9$ & \LkOursVNine & \LkJointDVNine & \LkModDVNine & \LkHeurDVNine \\
$12$ & \LkOursVTwelve & \LkJointDVTwelve & \LkModDVTwelve & \LkHeurDVTwelve \\
$15$ & \LkOursVFifteen & \LkJointDVFifteen & \LkModDVFifteen & \LkHeurDVFifteen \\
$18$ & \LkOursVEighteen & \LkJointDVEighteen & \LkModDVEighteen & \LkHeurDVEighteen \\
\bottomrule
\end{tabular}
\end{threeparttable}
\end{table}

At $|V|=3$, where loaded moves occupy \UtilLinkVThree\% of fleet time
and their own learned models lose to dispatching rules, BOLT outperforms
all three references (bold row of Table~\ref{tab:link}). From $|V|=6$
upward, all three references are shorter than BOLT. The $|V|=6$ setting runs at
\UtilLinkVSix\%, inside the untested interval between \UtilLoseMax\%
and \UtilWinMin\%, and larger fleets run lower. Re-timing our
schedules under their anticipatory vehicle rule changes our makespan
by $\LkWedgeVSix$ at $|V|=6$ and by up to $-\LkWedgeAbsVNine\%$ at
$|V|\ge9$, which explains part of the gap. The remainder is a quality
loss on the pure sequencing left once vehicles stop being scarce
(variant robustness and default travel intensity in the
supplement).

\subsection{Mechanism and Ablation Analysis}\label{subsec:mechanism}

\subsubsection{Fleet Load Determines When Explicit Transport Coupling
Matters}\label{subsec:regret}

Whether explicit transport modeling improves makespan depends on fleet
load. Following the regime mapping
of~\cite{link2026coordination}, we re-evaluate a transport-penalty policy,
the penalty anchor, against BOLT inside the explicit environment
(Fig.~\ref{fig:coupling}(a)). The anchor is trained with full travel-time knowledge but no vehicle
contention, and it was fixed before any premise test (specification
in the supplement). BOLT outperforms the anchor in the
three cells whose loaded utilization is at least $\UtilWinMin\%$,
and the anchor matches or outperforms BOLT in the nine cells
at or below $\UtilLoseMax\%$. The three winning cells are one
in-distribution cell ($|V|=1$, $\bar\tau/\bar p=0.6$, \UtlVOneTSix{}
loaded) and two zero-shot cells at $\bar\tau/\bar p=1.0$. No grid cell
lies between \UtilLoseMax\% and \UtilWinMin\%, so the crossover is
located only within this interval. The anchor in
Fig.~\ref{fig:coupling}(a) is one seed at half budget, and its vehicle
decisions follow the nearest-vehicle-first rule while BOLT's vehicle
head is learned. Fig.~\ref{fig:coupling}(a) therefore compares the two
complete policies and is descriptive.

A pre-specified premise test probes the same claim. It required the explicit-coupling policy to outperform the anchor on both scarce-fleet cells ($|V|\in\{1,2\}$; paired one-sided Wilcoxon, Holm-corrected). The first attempt, at moderate travel ($\bar\tau/\bar p=0.6$) with single runs, passed on the one-vehicle cell ($\UtlVOneTSix$ loaded, $\GaOneImpVOne\%$) but not on the two-vehicle cell ($\UtlVTwoTSix$ loaded), so the test failed. Its one permitted retry, at heavy travel ($\bar\tau/\bar p=1.0$), was fixed after that result and before any retry data, so it is an exploratory second test and not confirmatory evidence. With both arms retrained on travel intensities 0.6 and 1.0 over \Nseeds{} seeds, it passed on both cells, with improvements of $\GbImpVOne\%$ (W/T/L $\GbWtlVOne$) and $\GbImpVTwo\%$ ($\GbWtlVTwo$). Both attempts agree with the utilization boundary, and the failed attempt restricts every claim that follows to high fleet load (protocol and $p$ values in the supplement).

At production scale, BOLT is shorter than the anchor on every instance of the three batches of Table~\ref{tab:scale}, by \AnProdDMin--\AnProdDMax\% (W/T/L \AnProdWtl{} in each cell, $p=\AnProdP$, all arms on one processor thread). Every training seed alone is also shorter (Table~S-XIX). In the two one-vehicle batches with short moves, the factorized policy does not shorten the anchor's makespan, while the merged-head policy does (Table~S-V).

\subsubsection{Different Operating Regimes Activate Different Bound
Components}
\label{subsec:components}

Which relaxation attains the root maximum is readable from the
instances without solving (Fig.~\ref{fig:coupling}(b); Table~S-XXV).
The fleet component attains the maximum in the two most
transport-constrained cells (Remark~\ref{rem:fleet}). At
$|V|=2$ and $\bar\tau/\bar p=1.0$, explicit coupling already improves
the makespan although the chain term still sets the root bound on
nearly every instance. Machine scarcity is the complementary regime.
At 50 modules with light travel, factories of 25, 15, and 9 stations
are machine-bound on every instance (Table~S-XXVI). There the policy
outperforms the 60-CPU-s GA at 25 and 15 stations, which one-shot
decoding does not achieve in the fleet-bound cells. At nine stations the
two are not distinguishable, consistent with every method approaching
the same machine-capacity floor.

\subsubsection{Candidate Action Pricing Drives the Learning Gain}
\label{subsec:credit}

With the same architecture, data, and budget, we add the
bound-difference reward, then the analytic credit in the advantage,
then the action prices as features (Fig.~\ref{fig:curves},
Figs.~S-4 and~S-5, Table~S-VI).

\begin{figure}[!t]
\centering
\figframe{%
\includegraphics[width=\linewidth]{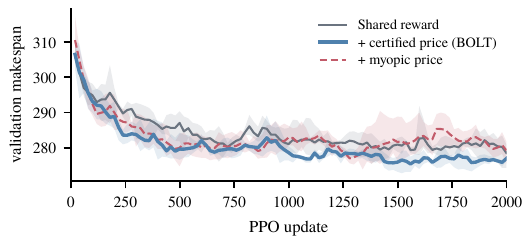}}
\caption{Validation makespan (hours) against PPO updates (line: seed
mean; band: seed range; mean over the nine training cells).}
\label{fig:curves}
\end{figure}

\begin{figure*}[!t]
\centering
\figframewide{\resizebox{0.96\linewidth}{!}{\definecolor{x2blf}{HTML}{A8C6E3}%
\definecolor{x2bln}{HTML}{4F81AD}%
\definecolor{x2amf}{HTML}{F3CF8F}%
\definecolor{x2amn}{HTML}{B8860B}%
\definecolor{x2gyf}{HTML}{C3CBD3}%
\definecolor{x2gyn}{HTML}{8A929B}%
\providecommand{\xFbig}{\fontsize{8.5}{9.5}\selectfont}%
\providecommand{\xFttl}{\fontsize{7.5}{8.5}\selectfont}%
\providecommand{\xFbdy}{\fontsize{7.0}{8.0}\selectfont}%
\begin{tikzpicture}[x=1cm,y=1cm,font=\xFbdy,
  every node/.style={text=black},
  bx/.style   ={draw,line width=0.5pt,rounded corners=1pt,align=center,
                inner sep=3pt,text=black},
  blu/.style  ={bx,fill=x2blf,draw=x2bln,minimum width=3.30cm,
                minimum height=0.44cm},
  wht/.style  ={bx,fill=white,draw=black,minimum height=0.52cm},
  gry/.style  ={bx,fill=x2gyf,draw=x2gyn,minimum height=0.40cm},
  hous/.style ={draw=black,line width=0.6pt,rounded corners=1.5pt,
                fill=white},
  hlb/.style  ={inner sep=0pt,anchor=north,text width=4.10cm,align=center,
                text=black},
  arr/.style  ={-{Latex[length=2.6pt,width=2.2pt]},line width=0.45pt,
                draw=black},
  aarr/.style ={{Latex[length=2.6pt,width=2.2pt]}-%
                {Latex[length=2.6pt,width=2.2pt]},line width=0.5pt,
                draw=x2amn},
  narr/.style ={{Latex[length=2.4pt,width=2.0pt]}-%
                {Latex[length=2.4pt,width=2.0pt]},line width=0.4pt,
                draw=x2gyn,dash pattern=on 1.6pt off 1.4pt},
  sep/.style  ={line width=0.4pt,draw=x2gyn},
  lb/.style   ={inner sep=1pt,text=black}]

\draw[draw=x2gyn,line width=0.5pt,rounded corners=1.5pt,fill=white]
  (0,1.86) rectangle (6.10,5.90);
\draw[draw=x2gyn,line width=0.5pt,rounded corners=1.5pt,fill=white]
  (7.50,1.86) rectangle (18.40,5.90);

\node[lb,font=\xFttl\bfseries] at (3.05,5.66) {Merged-head BOLT};
\node[lb,font=\xFttl\bfseries] at (12.95,5.66)
  {Factorized BOLT (deployed)};

\node[gry,minimum width=5.00cm] (lbc) at (3.05,5.14)
  {Shop-state broadcast};

\draw[hous] (0.80,2.88) rectangle (5.30,4.70);
\node[hlb] at (3.05,4.62)
  {\textbf{Single controller}\\(machine and vehicle events)};
\node[blu] (lenc) at (3.05,3.79) {Encoder};
\node[blu] (lhd)  at (3.05,3.21) {Merged decision head};
\draw[arr] (lenc.south) -- (lhd.north);

\node[wht,minimum width=4.50cm] (lact) at (3.05,2.36)
  {Machine or vehicle event action};

\draw[arr] (3.05,4.94) -- (3.05,4.70);
\draw[arr] (3.05,2.88) -- (lact.north);

\node[gry,minimum width=9.80cm] (rbc) at (12.95,5.14)
  {Shop-state broadcast};

\draw[hous] (8.05,2.88) rectangle (12.55,4.70);
\node[hlb] at (10.30,4.62)
  {\textbf{Station controller}\\(manufacturing execution system)};
\node[blu] (senc) at (10.30,3.79) {Encoder};
\node[blu] (shd)  at (10.30,3.21) {Machine head};
\draw[arr] (senc.south) -- (shd.north);

\draw[hous] (13.35,2.88) rectangle (17.85,4.70);
\node[hlb] at (15.60,4.62)
  {\textbf{Fleet controller}\\(automated guided vehicle fleet)};
\node[blu] (fenc) at (15.60,3.79) {Encoder};
\node[blu] (fhd)  at (15.60,3.21) {Vehicle head};
\draw[arr] (fenc.south) -- (fhd.north);

\draw[arr] (10.30,4.94) -- (10.30,4.70);
\draw[arr] (15.60,4.94) -- (15.60,4.70);

\draw[narr] (12.55,3.50) -- (13.35,3.50);
\fill[white] (12.95,3.50) circle (0.115);
\draw[draw=x2gyn,line width=0.7pt]
  (12.865,3.415) -- (13.035,3.585) (12.865,3.585) -- (13.035,3.415);

\node[wht,minimum width=4.00cm] (sact) at (10.30,2.36)
  {Machine event action};
\node[wht,minimum width=4.00cm] (fact) at (15.60,2.36)
  {Vehicle event action};
\draw[arr] (10.30,2.88) -- (sact.north);
\draw[arr] (15.60,2.88) -- (fact.north);

\end{tikzpicture}}}
\caption{Merged-head (left) and factorized (right) realizations of
BOLT.}
\label{fig:deploy}
\end{figure*}

\emph{The action-price input produces the training improvement.}
Under the pre-specified rule, the bound-guided arm
had to outperform the credit-only arm significantly on all four
cells. It succeeded on every cell and seed, with pooled
gains of $\GuImpVOneTThree\%$,
$\GuImpVTwoTThree\%$, $\GuImpVOneTSix\%$, and $\GuImpVTwoTSix\%$ (all
Holm-corrected $p\le\GuPvalMax$; seed-level tests in the
supplement).

\emph{Removing the fleet-capacity term lengthens the learned schedules
where the fleet binds.} An ablation arm trains BOLT without $\Bveh$ in
the bound and changes nothing else, so its reward, action prices,
and credit read only the chain and machine terms. The fleet term sets
the root bound in the two one-vehicle cells at $\bar\tau/\bar p=0.6$
and at the held-out 1.0 (Fig.~\ref{fig:coupling}(b)). In these cells
the ablation's makespan is $\AblVehDVOneTSix\%$ and
$\AblVehDVOneTTen\%$ longer, and both differences pass the
pre-specified one-sided Wilcoxon test (Holm-adjusted
$p<\AblVehPHolmMax$; W/T/L \AblVehWtlVOneTSix{} and
\AblVehWtlVOneTTen{}). All three seeds agree at 0.6, and two of three
agree at 1.0. The difference falls to $\AblVehDVOneTThree\%$ at
$\bar\tau/\bar p=0.3$ and stays within $\AblVehRestMax\%$ in the nine
cells with more vehicles or lighter travel (Table~S-VIII). On the three production batches of
Table~\ref{tab:scale}, it sets the root bound on every instance. Retraining the size-mixture policy without it lengthens
the schedules there by $\AblVehMixDFiftyTTen$--$\AblVehMixDEighty\%$
(pre-specified test, W/T/L \AblVehMixWtl{} in each cell,
Holm-adjusted $p=\AblVehMixPHolm$), against at most
$\AblVehMixSmallMax\%$ at the training sizes (Table~S-IX). Without
$\Bveh$, the root bound of these batches covers only
\RootNoVehCovMin--\RootNoVehCovMax\% of the best-known makespan, against
\RootVehCovMin--\RootVehCovMax\% with it. The ablation therefore shows
that a dense signal with fleet workload outperforms one without it, and
it does not compare the certified bound with a tighter uncertified
estimate.

\emph{Analytic credit alone gives no gain.} The pre-specified credit
test finds all four scarce-fleet cells equivalent to shared-reward
training within $\pm2\%$ (Holm-adjusted $p\le\GtwoTostPvalMax$), as the
control-variate lemma predicts (details in the supplement).

\emph{A per-candidate price carries most of the gain}
(Table~S-VII). A dense myopic duration price of the same mean size is
longer than the certified price in one of the four pre-specified cells,
by $\MyoSigD\%$ (one-sided $p=\MyoPOneSided$, Holm-corrected), and not
distinguishable in the other three. The evidence therefore supports
candidate-specific pricing and does not show that admissibility itself
improves schedules. A COMA-style learned counterfactual
critic~\cite{foerster2018coma} is longer than the certified arm in all
\ComaCells{} evaluated cells, by $\ComaDMin$--$\ComaDMax\%$, and this
result is descriptive because no test was fixed for that arm. The certified
price is zero on $\GuidePriceZeroPct\%$ of candidates, and the policy
ranks these by its learned features.

\subsubsection{Price Guidance under Processing-Time
Heterogeneity}\label{subsec:hetero}

The chain recursion substitutes the eligible minimum for every
uncommitted operation, so strongly machine-dependent processing times
could loosen the pre-commitment bound and weaken price guidance. We test
this effect zero-shot by redrawing processing times around each
operation's mean while the allowed spread across machines (the
dispersion level $R$) rises to $\HetEnvMax$. This level gives a realized
within-operation ratio of \HetRealizedHundredMean{} on average
(protocol and tables in the supplement). The certified floor stays below
the best-known makespan on all \HetNTransformed{} transformed instances.
On the two training cells, the price channel's advantage over
shared-reward training rises from $\HetAdvBaseVOne\%$ to
$\HetAdvPeakVOne\%$ on the scarce-fleet cell ($\HetAdvHundredVOne\%$ at
the stress level) and from $\HetAdvBaseVTwo\%$ to
$\HetAdvHundredVTwo\%$ on the moderate cell. Every level is significant
without multiplicity adjustment (largest $p=\HetTrainPvalMax$). On the
held-out heavy-travel cell, the advantage is negative at the baseline dispersion,
not distinguishable from zero at $R=2$, and positive from $R=5$ upward.
Commitment replaces the optimistic minimum with the machine-specific
time in the successor bound, so the share of candidates with a positive
price rises with dispersion (Corollary~\ref{cor:guide}). Against fixed
rules, the policy's zero-shot margin over the best pair persists at
heavy travel through $R=20$ and is overtaken at moderate travel from
realized ratios near $2.5$.

\subsection{Deployment and Industrial Decision Support}\label{subsec:deploy}

\subsubsection{Factorized Control and Its Makespan Cost}\label{subsec:parity}

Factorization costs makespan against the merged-head control on the
production batches but not at the training sizes. Across the
\TostCells{} training-size and transfer cells, the factorized policy is
cell-wise TOST-equivalent to the merged-head control within $\pm2\%$ in
\TostEq{} cells, with largest absolute difference $\TostEqMaxAbs$
(Table~S-XXVII). It is better in the
$15\times25$ zero-shot cell at $\bar\tau/\bar p=1.0$
($\TostBetterDiff$). On the production batches, both arms are retrained
on the same size mixture, and the merged head is shorter in all five
greedy-decoded cells (Table~S-XVIII). Its lead lies above the margin in
the two one-vehicle cells ($\MixSaRelFiftyLow\%$ and
$\MixSaRelEightyLow\%$). Equivalence is not established at 50 modules
in Table~\ref{tab:scale} ($\MixSaRelFiftyTSix\%$ and
$\MixSaRelFiftyTTen\%$ at $n=20$), and the arms are equivalent within
the margin at 80 modules under both decodings.
Because both realizations consume the same bound signals and backbone
capacity, the comparison isolates the cost of factorization, which is
therefore not the source of the scheduling gain. When the nearest-vehicle-first rule replaces every learned vehicle
decision, both realizations still outperform the best fixed dispatching-rule
pair on all five production batches, by \FbVsPdrMin--\FbVsPdrMax\%
(Table~S-XX). With the shortest-travel-time rule instead, both lose
\FbSttMin--\FbSttMax\% against their learned decisions, so the fallback
rule matters. BOLT keeps the factorized form for the
separate decision layers of Section~\ref{sec:realization}.

\subsubsection{Millisecond Dispatch and Second-Scale Disruption
Recovery}\label{subsec:latency}

The deployed factorized policy (Fig.~\ref{fig:deploy}) runs as a
\ParamsDeploy-parameter replica on each controller, with no central solver. On one processor core, the deployed
size-mixture policy takes \LatEvtCpuTen~ms per event at 10 modules and
\LatEvtCpuEighty~ms at 80 modules, including the shop-state update, and
it builds a whole schedule in \LatSchedCpuTen~s at 10 modules and \LatSchedCpuEighty~s at 80 modules.
A workstation GPU takes \LatEvtGpuEighty~ms per event at 80 modules,
and the process uses at most \PeakRssCpuMax~MiB (Table~S-XXXIII).
Deciding per event also lets the policy re-plan after a breakdown. When the machine with the most remaining work fails at 30\% of the plan on the two heavy production batches, the policy re-plans the rest in \DisWallFifty{}--\DisWallEighty~s and outperforms right-shift repair and re-dispatching (Table~S-XXI). Warm-started CP-SAT returns the right-shifted plan unchanged at search budgets of 1 and 10~s, which take \DisrCpWallShortMin--\DisrCpWallShortMax~s of wall time. At 60~s it shortens that plan by at most \DisrCpImpMax\%, so the policy's schedule stays \DisrCpVsPolMin--\DisrCpVsPolMax\% shorter (Table~S-XXII). The GA overtakes the policy after \DisrGaWallMin--\DisrGaWallMax~s of wall time, and the policy's schedule is then \DisrGaVsPolMin--\DisrGaVsPolMax\% longer. A plant that can wait about two minutes therefore gets a shorter repaired schedule from the GA, while the policy provides a repaired plan within seconds and a decision at every event. Under duration noise of up to $\pm30\%$ (one training seed, two cells), re-planning did not significantly shorten right-shift execution (largest difference $\NoisePolReplanVsRsMax\%$), and the re-plan stays $\NoisePolVsPdrMin$--$\NoisePolVsPdrMax\%$ shorter than rule re-dispatch (Table~S-XXIII).

\subsubsection{The Same Bound Certifies Learned Schedules and
Tightens the CP-SAT Bound}\label{subsec:cert}

The root
bound $B(s_0)$ of Corollary~\ref{cor:root}, computed from static
instance data before any dispatch, certifies any schedule of makespan
$C$ as within $(C-B(s_0))/B(s_0)$ of optimal. This certificate is valid
on every instance, and its tightness depends on which term sets the
bound. The solver closes \TightN{} of the \CertNadm{} grid and transfer
instances, all of them chain-bound, and on these the root bound averages
\TightMeanPct\% of the proven optimum (worst \TightWorstPct\%). Over all \CertNadm{} instances it
averages \TightAllMean\% of the best-known makespan, and
\TightFleetMean\% on the \TightFleetN{} instances where the fleet term
sets it (both lower limits on tightness; Table~S-XVI, Fig.~S-6). The measured
certificate is therefore loosest where the fleet binds. On the production batches of Table~\ref{tab:scale}, the
certificate places the policy's schedules within
\CertFiftyLow--\CertFiftyHigh\% of optimal on average per cell. This
guarantee is loose, but it stays valid where the time-limited solver no
longer closes the problem.

Added to CP-SAT as a
redundant cumulative constraint, the fleet term raises the solver's
proven lower bound above the better of its own bound and $B(s_0)$.
The excess is \CpAddRootTSixMin--\CpAddRootTSixMax\% at
$\bar\tau/\bar p=0.6$ and \CpAddRootTTenMin--\CpAddRootTTenMax\% at
$1.0$ ($|V|=1$, 12 instances per cell, Table~S-X). With the constraint, the incumbent is
\CpIncImpTSix\% shorter at $0.6$ (W/T/L \CpIncWtlTSix) and unchanged at
$1.0$ (\CpIncWtlTTen), and no variant closes an instance in \cpbudget~s.

\section{Conclusion}\label{sec:conclusion}

When a scarce fleet moves jobs between stations, the
transfer delay depends on the schedule. BOLT uses one certified
multi-bottleneck bound whose chain, machine-capacity, and
fleet-capacity components never exceed the makespan still attainable.
The bound equals the makespan at termination, so its temporal
difference is an objective-faithful reward, and its candidate-action
increases are proven rises of the makespan floor that the policy reads
as prices. Ablations on four pre-specified cells attribute the training
gain to these prices, and removing the fleet-capacity term lengthens the
learned schedules by $\AblVehMixDFiftyTTen$--$\AblVehMixDEighty\%$ on
three production batches. On the industry-motivated PPVC benchmark,
explicit transport coupling reduces makespan in every tested 10-module
cell where loaded moves occupy at least $\UtilWinMin\%$ of fleet time,
and a transport-penalty policy is adequate at or below $\UtilLoseMax\%$. Across five
production batches, BOLT outperforms the best dispatching-rule pair in
four and the penalty policy in the three with long moves. With three vehicles
at 80 modules, it also produces shorter schedules than an hour-budget solver that ends
far from its own bound. In the two one-vehicle batches, the factorized
policy does not shorten the penalty policy's makespan, and a merged-head
policy does. The root bound gives a valid but loose certificate of a
completed schedule's suboptimality gap without a solver, and as a
redundant constraint it raises the lower bound that CP-SAT proves.
Stochastic arrivals, vehicle routing and congestion, heterogeneous
fleets, finite buffers, and online dispatch without destination-machine
reservations remain for future work, and other objectives would need
their own bound. Within the tested scope, BOLT dispatches in
milliseconds per event, re-plans a disruption in seconds, and gives
station and vehicle controllers separate decision layers.

\bibliographystyle{IEEEtran}
\bibliography{refs}
\clearpage
\includepdf[pages=-]{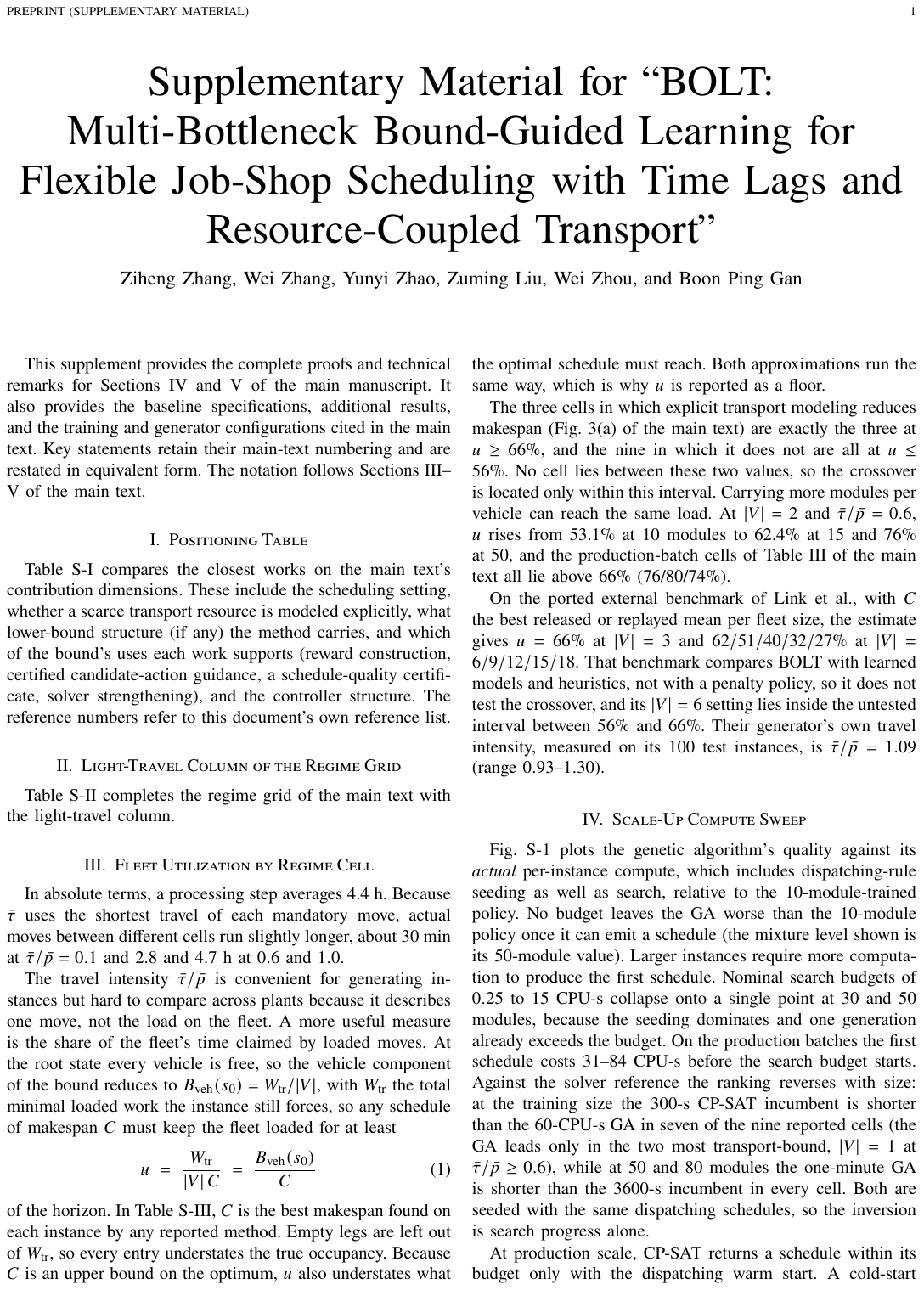}
\end{document}